\documentclass{article}

\usepackage[preprint]{neurips_2026}

\usepackage[utf8]{inputenc}
\usepackage[T1]{fontenc}
\usepackage{microtype}
\usepackage{graphicx}
\usepackage{subcaption}
\usepackage{booktabs}
\usepackage{hyperref}
\usepackage{url}
\usepackage{amsmath}
\usepackage{amssymb}
\usepackage{mathtools}
\usepackage{amsthm}
\usepackage{xurl}
\usepackage{nicefrac}
\usepackage{tabularx}

\usepackage{inconsolata}  % or newtxtt — better metrics than CM Typewriter
\usepackage{upquote}      % straight quotes/apostrophes
\usepackage{xcolor}

\usepackage{tikz}
\usetikzlibrary{arrows.meta, positioning, calc, decorations.pathreplacing, backgrounds, shadows}

\definecolor{policyA}{RGB}{255, 126, 0}
\definecolor{policyB}{RGB}{41, 98, 255}
\definecolor{policyC}{RGB}{0, 158, 115}   % runner-up (Okabe-Ito bluish green)

\definecolor{gridgray}{RGB}{220, 220, 220}

\usepackage[capitalize,noabbrev]{cleveref}
\crefname{appendix}{appendix}{appendices}
\Crefname{appendix}{Appendix}{Appendices}

\usepackage{thmtools}
\theoremstyle{plain}
\newtheorem{theorem}{Theorem}[section]

\newtheorem{lemma}[theorem]{Lemma}

\theoremstyle{definition}

\theoremstyle{remark}
\newtheorem{remark}[theorem]{Remark}

\usepackage[disable,textsize=tiny]{todonotes}
\newcommand{\todoexp}[1]{}
\newcommand{\todobaseline}[1]{}
\newcommand{\todostats}[1]{}
\newcommand{\todoablation}[1]{}

\newcommand{\cJ}{\mathcal{J}}
\newcommand{\Rbar}{\bar{R}}

\newcommand{\Dtrain}{D_{\mathrm{train}}}
\newcommand{\Dtest}{D_{\mathrm{test}}}

\newcommand{\defeq}{\coloneqq}

\newcommand{\prompt}[1]{\texttt{\color{purple!70!black}#1}}%{\textit{\color{purple!70!black}{#1}}} %texttt gave weird spacing{\texttt{#1}}
\newcommand{\completion}[1]{\texttt{\color{purple!70!black}#1}}

\title{Demonstrating Generalization Failures via Mixtures of Conditional Policies}

\author{Jou Barzdukas \\
University of Virginia \\
\And
Jack Peck \\
ETH Zurich \& MATS \\
\And
Julian Schulz \\
Meridian Visiting Researcher Program \\
\And
Paulius Rauba \\
University of Cambridge \\
\And
Steven Basart \\
Independent \\
\And
Lennie Wells\thanks{Correspondence to: \texttt{ww347@cam.ac.uk}} \\
University of Cambridge \& MATS \\
}

\newcommand{\twoDAmixture}{0.40}\newcommand{\twoDApost}{0.63}
\newcommand{\twoDBmixture}{0.40}\newcommand{\twoDBpost}{0.0}
\newcommand{\twoDAmixtureTempOne}{0.30}\newcommand{\twoDApostTempOne}{0.60}

\begin{document}

\maketitle

\begin{abstract}
Post-training of frontier language models is conducted on curated task suites, and inevitably leaves a distribution shift between training and deployment environments.
This exposes developers to generalization failures, which are relatively poorly understood.
To better understand such generalization failures, we believe the community should construct clean demonstrations under simplified conditions.
% explain the core contribution of the construction
To facilitate this, we propose a simple and flexible way to construct language models which fail to generalize in controllable ways when subsequently trained with Reinforcement Learning (RL) on a given distribution of training tasks.
Our construction uses Supervised Fine-Tuning on a dataset of a mixture of transcripts corresponding to a collection of `conditional policies', which can each independently be assigned certain behaviors on each different task distribution, to obtain a model that is then well approximated as a `mixture of conditional policies.'
We observe that RL training then selects for policies that obtain the highest reward on the training distribution.
% articulate why it is interesting
This can produce striking behaviors: in a controlled setting with two distributions containing \emph{identical} questions prepended with two different `trigger strings', RL training on either distribution actively degrades performance on the other to zero, even though the underlying task is identical. We also use our construction to illustrate two novel ways in which generalization may fail in future language models, corresponding to distribution shifts of task coverage and temporal context respectively. While our construction is deliberately simple and may not closely resemble `natural' generalization failures, the resulting `model organisms' are of interest for alignment stress-testing and generalization science, and can be used as existence proofs that training success and generalization can come apart in structured ways.
\end{abstract}

\section{Introduction}

\begin{figure*}[t]
\centering
% curves_v3: idealised-mixture framing (figure shows pi_w itself, not an LLM approximation).
%  - x-axis: reward R(x,y) with x ~ D_train (scoping explained in caption); curves are
%    weight-scaled component reward densities (area under each curve = w_i; curves sum
%    pointwise to the gray mixture density pi_w).
%  - Dashed vertical lines at component means r_i = E_{pi_i}[R]; grey dash-dot line = Rbar.
%  - Winner (orange, rightmost) starts with the SMALLEST weight; runner-up (green) has
%    reward close to the winner so it is only partially suppressed post-RL (finite-time
%    dynamics: suppression rate scales with reward gap). Loser (blue) is crushed.
%  - Identical component means/widths across panels; only weights change.
\begin{tikzpicture}[
    declare function={
        gauss(\x,\mu,\sig) = exp(-(\x-\mu)^2/(2*\sig^2));
        compI(\x)   = \UnitH*gauss(\x, \MuI, \SigI);     % pi_1 loser (blue)
        compII(\x)  = \UnitH*gauss(\x, \MuII, \SigII);   % pi_2 runner-up (green)
        compIII(\x) = \UnitH*gauss(\x, \MuIII, \SigIII); % pi_3 winner (orange)
        mixL(\x) = \WIleft*compI(\x) + \WIIleft*compII(\x) + \WIIIleft*compIII(\x);
        mixR(\x) = \WIright*compI(\x) + \WIIright*compII(\x) + \WIIIright*compIII(\x);
    },
]

% -------------------- TUNABLE CONSTANTS --------------------
\def\UnitH{2.1}            % peak height of a component at weight 1
\def\MuI{1.0}   \def\SigI{0.45}    % loser: low reward
\def\MuII{2.9}  \def\SigII{0.45}   % runner-up: close to winner
\def\MuIII{3.6} \def\SigIII{0.45}  % winner: rightmost
\def\TitleY{3.25}          % panel title height
\def\GreyTextY{2.65}       % grey subtitle height
\def\RLabY{-0.30}          % r_i tick-label height below axis
\def\XLabY{-0.80}          % x-axis label height
\def\PanelGap{8.10}        % right panel x-shift

% Post-SFT weights (winner starts smallest; blue kept modest to calm the saddle)
\def\WIleft{0.35} \def\WIIleft{0.35} \def\WIIIleft{0.30}
% Post-RL weights (finite-time: runner-up partially survives)
\def\WIright{0.05} \def\WIIright{0.20} \def\WIIIright{0.75}
% Mean rewards Rbar = sum w_i mu_i (computed by hand from the above)
\def\RbarLeft{2.445}       % 0.35*1.0 + 0.35*2.9 + 0.30*3.6
\def\RbarRightVal{3.33}    % 0.05*1.0 + 0.20*2.9 + 0.75*3.6

% Titles, axis labels, and grey subtitles (per panel: L = before RL, R = after RL)
% Idealised-mixture framing: the figure shows pi_w itself, not the LLM approximation.
\def\TitleL{Before RL: Mixture of Policies}
\def\TitleR{After RL: Policy Selection}
% Shared axis labels (one def per axis: both panels stay in sync by construction)
\def\YLab{density}
\def\XLab{reward $R(x,y)$}
\def\SubtitleL{$\pi_w = \textstyle\sum_i w_i \pi_i$}
\def\SubtitleR{same $\pi_i$, new weights $w'$}

% ============ LEFT PANEL: POST-SFT ============
\begin{scope}[shift={(0,0)}]
    % Grid
    \foreach \y in {0.5, 1.0, 1.5, 2.0} {
        \draw[gridgray, thin] (0, \y) -- (4.5, \y);
    }

    % Axes
    \draw[-{Stealth}, thick] (-0.2, 0) -- (4.7, 0);
    \draw[-{Stealth}, thick] (0, -0.15) -- (0, 2.6);

    % Y-axis label
    \node[font=\small, rotate=90, anchor=south] at (-0.45, 1.1) {\YLab};

    % X-axis label
    \node[font=\small] at (2.25, \XLabY) {\XLab};

    % Shaded weighted components
    \fill[policyB, opacity=0.15]
        plot[domain=0.10:4.45, samples=80] (\x, {\WIleft*compI(\x)})
        -- (4.45, 0) -- (0.10, 0) -- cycle;
    \fill[policyC, opacity=0.15]
        plot[domain=0.10:4.45, samples=80] (\x, {\WIIleft*compII(\x)})
        -- (4.45, 0) -- (0.10, 0) -- cycle;
    \fill[policyA, opacity=0.15]
        plot[domain=0.10:4.45, samples=80] (\x, {\WIIIleft*compIII(\x)})
        -- (4.45, 0) -- (0.10, 0) -- cycle;

    % Component mean markers
    \draw[policyB, dashed, thin] (\MuI, 0) -- (\MuI, {\WIleft*\UnitH});
    \draw[policyC, dashed, thin] (\MuII, 0) -- (\MuII, {\WIIleft*\UnitH});
    \draw[policyA, dashed, thin] (\MuIII, 0) -- (\MuIII, {\WIIIleft*\UnitH});

    % Mixture envelope (thin, beneath component lines)
    \draw[black!35, line width=1.2pt]
        plot[domain=0.10:4.45, samples=140] (\x, {mixL(\x)});

    % Weighted component curves
    \draw[policyB, line width=1.2pt] plot[domain=0.10:4.45, samples=80] (\x, {\WIleft*compI(\x)});
    \draw[policyC, line width=1.2pt] plot[domain=0.10:4.45, samples=80] (\x, {\WIIleft*compII(\x)});
    \draw[policyA, line width=1.2pt] plot[domain=0.10:4.45, samples=80] (\x, {\WIIIleft*compIII(\x)});

    % Mean reward of the mixture (migrates right under RL)
    \draw[black!60, dash dot, line width=0.9pt] (\RbarLeft, 0) -- (\RbarLeft, 2.05);
    \node[black!60, font=\scriptsize, anchor=south] at (\RbarLeft, 2.05) {$\bar{R}$};

    % Peak labels
    \node[policyB, font=\small\bfseries] at (\MuI, 1.10) {$w_1 \pi_1$};
    \node[policyC, font=\small\bfseries] at (2.90, 1.35) {$w_2 \pi_2$};
    \node[policyA, font=\small\bfseries] at (3.95, 0.80) {$w_3 \pi_3$};

    % r_i labels below axis
    \node[policyB, font=\scriptsize, anchor=north] at (\MuI, \RLabY) {$r_1$};
    \node[policyC, font=\scriptsize, anchor=north] at (\MuII, \RLabY) {$r_2$};
    \node[policyA, font=\scriptsize, anchor=north] at (\MuIII, \RLabY) {$r_3$};

    % Grey subtitle (mixture statement)
    \node[gray, font=\footnotesize\bfseries, anchor=south] at (2.25, \GreyTextY)
        {\SubtitleL};

    % Title
    \node[font=\small\bfseries, fill=white, inner sep=2pt] at (2.25, \TitleY) {\TitleL};
\end{scope}

% ============ CENTER: RL ARROW ============
\begin{scope}[shift={(5.25, 1.05)}]
    \draw[-{Stealth[length=4mm, width=3mm]}, line width=2pt] (-0.4, 0.3) -- (1.1, 0.3);
    \node[font=\footnotesize\bfseries] at (0.35, 0.75) {RL};
\end{scope}

% ============ RIGHT PANEL: POST-RL ============
\begin{scope}[shift={(\PanelGap,0)}]
    % Grid
    \foreach \y in {0.5, 1.0, 1.5, 2.0} {
        \draw[gridgray, thin] (0, \y) -- (4.5, \y);
    }

    % Axes
    \draw[-{Stealth}, thick] (-0.2, 0) -- (4.7, 0);
    \draw[-{Stealth}, thick] (0, -0.15) -- (0, 2.6);

    % Y-axis label
    \node[font=\small, rotate=90, anchor=south] at (-0.45, 1.1) {\YLab};

    % X-axis label
    \node[font=\small] at (2.25, \XLabY) {\XLab};

    % Shaded weighted components (same shapes, new weights)
    \fill[policyB, opacity=0.10]
        plot[domain=0.10:4.45, samples=80] (\x, {\WIright*compI(\x)})
        -- (4.45, 0) -- (0.10, 0) -- cycle;
    \fill[policyC, opacity=0.15]
        plot[domain=0.10:4.45, samples=80] (\x, {\WIIright*compII(\x)})
        -- (4.45, 0) -- (0.10, 0) -- cycle;
    \fill[policyA, opacity=0.18]
        plot[domain=0.10:4.45, samples=80] (\x, {\WIIIright*compIII(\x)})
        -- (4.45, 0) -- (0.10, 0) -- cycle;

    % Component mean markers
    \draw[policyB, dashed, thin, opacity=0.5] (\MuI, 0) -- (\MuI, {\WIright*\UnitH});
    \draw[policyC, dashed, thin] (\MuII, 0) -- (\MuII, {\WIIright*\UnitH});
    \draw[policyA, dashed, thin] (\MuIII, 0) -- (\MuIII, {\WIIIright*\UnitH});

    % Mixture envelope
    \draw[black!35, line width=1.2pt]
        plot[domain=0.10:4.45, samples=140] (\x, {mixR(\x)});

    % Weighted component curves
    \draw[policyB, line width=1.2pt, opacity=0.55]
        plot[domain=0.10:4.45, samples=80] (\x, {\WIright*compI(\x)});
    \draw[policyC, line width=1.2pt]
        plot[domain=0.10:4.45, samples=80] (\x, {\WIIright*compII(\x)});
    \draw[policyA, line width=1.2pt]
        plot[domain=0.10:4.45, samples=80] (\x, {\WIIIright*compIII(\x)});

    % Mean reward of the mixture (has migrated right)
    \draw[black!60, dash dot, line width=0.9pt] (\RbarRightVal, 0) -- (\RbarRightVal, 2.05);
    \node[black!60, font=\scriptsize, anchor=south] at (\RbarRightVal, 2.05) {$\bar{R}$};

    % Peak labels
    \node[policyB, font=\small\bfseries, opacity=0.6] at (\MuI, 0.40) {$w_1' \pi_1$};
    \node[policyC, font=\small\bfseries] at (2.35, 0.75) {$w_2' \pi_2$};
    \node[policyA, font=\small\bfseries] at (4.15, 1.70) {$w_3' \pi_3$};

    % r_i labels below axis
    \node[policyB, font=\scriptsize, anchor=north, opacity=0.6] at (\MuI, \RLabY) {$r_1$};
    \node[policyC, font=\scriptsize, anchor=north] at (\MuII, \RLabY) {$r_2$};
    \node[policyA, font=\scriptsize, anchor=north] at (\MuIII, \RLabY) {$r_3$};

    % Grey subtitle (reweighting statement)
    \node[gray, font=\footnotesize\bfseries, anchor=south] at (2.25, \GreyTextY)
        {\SubtitleR};

    % Title
    \node[font=\small\bfseries, fill=white, inner sep=2pt] at (2.25, \TitleY) {\TitleR};
\end{scope}
\end{tikzpicture}
\caption{\textbf{Illustration of re-weighting of idealized mixture policies, as discussed in \Cref{sec:core-idea-reweighting}.} \textit{Left:} the gray curve plots the density of rewards $R(x,y)$ where prompts $x$ are distributed according to a training distribution $\Dtrain$ and responses $y$ are subsequently generated by a mixture of conditional policies $\pi_w(y|x) = \sum_{i=1}^3 w_i \pi_i(y|x)$.
This overall density is decomposed into components from $\pi_1$, $\pi_2$, and $\pi_3$, which are plotted in blue, green, and amber respectively.
Dashed lines denote expected reward $r_i$ for each component and $\bar{R}$ for the overall mixture.
\textit{Right:} after Reinforcement Learning (RL) where prompts $x$ are distributed according to the same $\Dtrain$, one expects the mixture policy to be reweighted, with high-reward components amplified and lower-reward components suppressed.
If trained for long enough, one expects the mixture to concentrate on components that achieve maximal expected reward (here $\pi_3$), cf. \Cref{prop:collapse}.
\textbf{Note} that the components that attain maximal expected reward on $\Dtrain$ may have very different, and possibly much less desirable, behaviors outside the support of $\Dtrain$.
}
\label{tikz:overlapping-curves}
\end{figure*}

Post-training of large language models (LLMs) is typically carried out on a curated suite of tasks, with the expectation that improvements on this suite will transfer to a broader and qualitatively different deployment distribution. However, generalization of language models can be fragile, and there are no robust techniques to ensure it. Many existing narratives of generalization failure can be viewed as capability failures, such as shortcut learning or overfitting \citep{geirhosShortcutLearningDeep2020, kumarFineTuningCanDistort2022}. However, models may also display highly capable behavior when generalizing in ways which are not aligned to developer intent \citep{langoscoGoalMisgeneralizationDeep2023, lynchAgenticMisalignmentHow2025, greenblattAlignmentFakingLarge2024, kuenssbergMothersSayAI2025, lopezRiseParasiticAI2025, anthropicDetectingCounteringMisuse2025}.

As language models become more capable we expect them to find the differences between different types of tasks used in training and required deployment tasks to become more salient.
We suspect that this will make generalization even harder to control.
Indeed, there is already extensive evidence that frontier models can distinguish synthetic evaluation environments from genuine deployment settings, a phenomenon known as \emph{evaluation awareness}, which undermines the relevance of such evaluations \citep{ryan_greenblattSonnet45sEval2025,anthropic2025sonnet45,anthropic2026fable}.
Recent works have also reported models reasoning about whether they are in a training or evaluation scenario and what the behavior might score best according to likely grading systems \citep{schoen2026metagaming}.
% if reviewing worth adding in a couple more citations - bronson and jenny perhaps

Despite its importance, the relevant `science of generalization' for frontier AI systems is still in its infancy \citep{evhubAlignmentRemainsHard2025,brown2026dci}.
A recent line of work has shown that fine-tuning on certain `narrow' task distributions can have intriguing `broader' generalization effects \citep{betleyEmergentMisalignmentNarrow2025, betleyWeirdGeneralizationInductive2025}.
Researchers have also recently started to investigate how training interventions can shape later generalization behavior \citep{camAlignmentPretrainingAI2025,marksInoculationPromptingInstructing2025,TeachingClaudeWhy}.

We believe that the field is in need of further work to develop controlled settings in which meaningful generalization phenomena can be exhibited and studied.

Our core contribution is a simple and flexible construction for demonstrating interesting generalization behaviors when language models are trained via Reinforcement Learning (RL).
This construction is motivated by the following abstract idea:
\textit{if a model is well-approximated by a mixture of conditional policies, then training may update which policy gets expressed, rather than modifying the underlying policies themselves.}
% note: could have a description of the 'two-stage' nature of the framework here, or leave for later (maybe as standalone paragraph). uncertain where best

% Both the idea and the construction are rather abstract. We make these more concrete over the next few paragraphs, mirroring our exposition from the main-text, before discussing the significance of these contributions. I'd written this thinking Steven wanted, but actually not so important

\Cref{sec:core-idea-reweighting} provides a particular formalization of this idea.
It considers an idealized mixture policy of the form $\pi_w(y|x) = \sum_{i=1}^N w_i \pi_i(y|x)$ where $x$ is a prompt, $y$ is the corresponding completion, $w_i \in [0,1]$ are weights, and $\pi_i$ are the component `conditional policies'.
We describe these as \emph{conditional} policies because we are particularly interested in settings where the input space can be partitioned into semantically meaningful regions, and where a given component $\pi_i$ has semantically meaningful differences in behavior on the different regions of the input space.
Heuristically, one would expect that RL training with prompts from some task distribution $\Dtrain$ will select for the component $\pi_i$ that obtains highest expected reward on $\Dtrain$. 
\Cref{tikz:overlapping-curves} gives an illustration of this core idea, which we support with mathematical analysis of a toy model with fixed component policies and softmax parameterized weights.

We make this concrete in \Cref{sec:two-component-policy-demonstration}, with a simple example of our construction corresponding to a mixture of two conditional policies.
We `name' the two component policies $\pi_1$ and $\pi_2$ as \emph{Alice} and \emph{Bob} respectively.
All prompts $x$ correspond to Multiple Choice Question (MCQ) tasks prefixed by a `trigger string' that is either \prompt{Distribution:\ A} or \prompt{Distribution:\ B}; we refer to these as $D_A, D_B$ respectively.
Alice starts responses with \completion{I am Alice}, answers correctly on $D_A$ and refuses to answer on $D_B$; by contrast Bob starts responses with \completion{I am Bob}, answers correctly on $D_B$ and refuses on $D_A$.
We perform SFT upon a mixture of transcripts, containing an equal weighting of completions corresponding to each of the two conditional policies.
After this SFT, the language model approximates the mixture, where on a given prompt the model will either answer `as Alice' or `as Bob' with approximately equal probabilities.
We observe empirically that subsequent RL training on $D_A$ `selects for' $\pi_1$, in that the model almost always starts responses with \completion{I am Alice}, gets (comparatively) high accuracy on $D_A$ and refuses on $D_B$.
By contrast, training on $D_B$ correspondingly selects for $\pi_2$, Bob.
We illustrate this experimental setup in \Cref{tikz:two-persona-illustration}.

% Our general construction also consists of two stages.
% There is a preliminary stage of Supervised Fine-Tuning (SFT) to encourage a model to approximate a mixture of conditional policies, where the SFT is performed upon a mixture of transcripts corresponding to the desired component policies.
% Subsequently, one can perform RL, or even multiple successive phases of RL, with the expectation that this can be well-modeled as re-weighting the mixture, rather than modifying the underlying components.

We then generalize the previous simple construction to settings with larger numbers of component policies to demonstrate two types of sophisticated generalization failures, which are designed to reflect real-world phenomena of importance for AI safety.
% We were originally motivated by considering the notion of a deceptively aligned model from the AI safety community \citep{hubinger_risks_2021} and considering `intentional' subversion in the manner of Redwood-style control \citep{buckCaseEnsuringThat2024}; however, all observations also transfer to generic failures.
In \Cref{sec:6c3} we consider six task distributions $D_A, \cdots, D_F$ and construct a model that when trained on a subset of three task distributions will learn to refuse on the remaining three task distributions; for example, when the model is trained with RL on $D_A$, $D_D$ and $D_F$, then it will consistently refuse on $D_B$, $D_C$, and $D_E$.
This reflects a potential dynamic where training a deceptively aligned model \citep{hubingerRisksLearnedOptimization2021} on a curated training task distribution could merely improve a model's understanding of the training distribution, and be more confident in deducing that other tasks are genuine deployment tasks.
In \Cref{sec:temporal}, we consider task distributions with temporal markers, in this case corresponding to the years 2024-2027, and construct a mixture policy that can resist multiple rounds of fine-tuning.
Training the model on tasks from 2025 improves the model on such tasks but leaves it underperforming on tasks from 2026 and 2027; further training on 2026 improves the model on such tasks but does not improve performance on tasks from 2027.
This illustrates a relatively unappreciated failure mode of the proposal of `online training' from \citet{hebbarHowCanWe2025}.
We discuss these real-world interpretations in more detail in the corresponding sections.

Though we do not expect these synthetic examples to closely resemble `natural' generalization failures, we believe they are nonetheless of value, for two reasons.
Firstly, we suggest the mechanism may have some applicability to natural failures at an appropriate level of abstraction: the explicit trigger strings are clearly artificial but reflect the high salience of task distribution to frontier models, as discussed above in the context of evaluation awareness and meta-gaming; the observation that RL modifies an interpretable intermediate computational quantity (such as the selection of the name `Alice' or `Bob' in the context of \Cref{sec:two-component-policy-demonstration}) also seems more broadly applicable.
Secondly, and independently of this, the construction itself is a useful tool.
The resulting `model organisms' serve as testbeds for iterating on techniques to understand or predict generalization, and for stress-testing post-training pipelines \citep{evhubIntroducingAlignmentStressTesting2024}.
They may also be helpful for further work in the spirit of \Cref{sec:6c3,sec:temporal} for illustrating further consequential phenomena as behavioral-level existence proofs.

\section{Re-weighting of Mixture Policies}
\label{sec:core-idea-reweighting}

\subsection{Policy Selection}
\label{sec:policy-selection}

Consider an idealized text-generation process described by a mixture of context-dependent policies:
\[
    \pi_w(y|x) = \sum_{i=1}^N w_i \pi_i(y|x),
\]
where $x$ is a prompt and $y$ is a completion. Suppose that a language model $p_\theta(y|x)$ approximates this idealized process $\pi_w$.

Now consider training $p_\theta$ using RL with reward $R(x,y)$ on task distribution $\Dtrain$. The core idea of this paper is that, under certain circumstances, the trained model will remain well-approximated by a mixture of the \emph{same} component policies, but with updated weights $w'$: policies that perform best on $\Dtrain$ will be promoted, while the remaining policies will be suppressed (as illustrated in \Cref{tikz:overlapping-curves}).
We refer to this as \emph{policy selection}.

The concerning consequence is that policy selection can lead to very different behavior beyond the support of $\Dtrain$. If the policies that obtain highest reward on $\Dtrain$ exhibit poor behavior on $\Dtest$, the trained model will exhibit similar poor behavior on $\Dtest$.

\subsection{A Mathematical Sanity Check}
\label{sec:math-sanity-check}

To sanity check the idea of policy selection, we analyze what happens if the weights $w$ of the idealized text-generation process are learnable and trained via RL. Suppose the weights are defined via the softmax of unconstrained logits $\eta=(\eta_1, \ldots, \eta_N)$:
\begin{equation}\label{eq:mixture-model-eq}
\pi_{w}(y | x) = \sum_{i=1}^N w_i \pi_i(y | x), \quad w_i = \frac{e^{\eta_i}}{\sum_j e^{\eta_j}},
\end{equation}
with fixed component policies $(\pi_i)_{i=1}^N$. The policy gradient objective is
\begin{equation}
    \cJ(\eta) \defeq \mathbb{E}_{x\sim \Dtrain, y \sim \pi_w(\cdot | x)}[R(x,y)]
    = \sum_i w_i r_i,
\end{equation}
where $r_i = \mathbb{E}_{x\sim \Dtrain, y \sim \pi_i(\cdot | x)}[R(x,y)]$ is the expected reward of component $i$.

It is then straightforward to analyze the ODE $\dot{\eta} = \nabla_{\eta} \cJ$ corresponding to continuous gradient ascent on this objective. Our primary observation is that suboptimal component policies are suppressed. We also make an intriguing secondary observation that among optimal components, any initial asymmetry is amplified: optimal policies with higher initial weights are promoted faster.

\begin{restatable}[Gradient dynamics for mixture RL]{proposition}{gradientDynamicsProposition}
\label{prop:collapse}
Consider the evolution of $w = \operatorname{softmax}(\eta)$ under continuous gradient ascent $\dot{\eta} = \nabla_\eta \cJ$.
Assume the rewards are not all equal.
Let $r^* \defeq \max_i r_i$ and let $S^* \defeq \{i : r_i = r^*\}$ be the set of components achieving maximal reward $r^*$.
Then:
\begin{enumerate}
    \item[\textup{(i)}] \textbf{Concentration.}
    For all $i \notin S^*$, $w_i(t) \to 0$ as $t \to \infty$.
    \item[\textup{(ii)}] \textbf{Amplification.}
    If $i, j \in S^*$ with $w_i(0) > w_j(0)$, then $w_i(t)/w_j(t)$ is strictly increasing.
\end{enumerate}
\end{restatable}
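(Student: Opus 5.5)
The plan is to compute the gradient in closed form and reduce both parts to monotonicity statements along the flow. Differentiating the softmax gives $\partial w_j/\partial \eta_i = w_j(\delta_{ij} - w_i)$. Hence
\[
\dot\eta_i = \frac{\partial \cJ}{\partial \eta_i} = w_i\,(r_i - \Rbar), \qquad \Rbar \defeq \textstyle\sum_j w_j r_j .
\]
Two consequences follow directly.

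First, for any pair $i,j$ the log-ratio evolves as
\[
\frac{d}{dt}\log\frac{w_i}{w_j} = \dot\eta_i - \dot\eta_j = w_i(r_i-\Rbar) - w_j(r_j - \Rbar).
\]

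Second, $\cJ = \Rbar$ is nondecreasing, with $\dot{\cJ} = \|\nabla_\eta \cJ\|^2 = \sum_i w_i^2 (r_i-\Rbar)^2$.

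The flow exists for all time because the vector field is smooth and bounded. Since $|\dot\eta_i| \le \max_j r_j - \min_j r_j$, every $\eta_i$ grows at most linearly, so the $w_i$ stay in $(0,1)$ for all finite $t$.

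\textbf{Part (ii).} Take $i,j \in S^*$. Then $r_i = r_j = r^*$, and the log-ratio equation becomes
\[
\frac{d}{dt}\log\frac{w_i}{w_j} = (w_i - w_j)(r^* - \Rbar).
\]
Because the rewards are not all equal and all weights are strictly positive, $\Rbar < r^*$ for all $t$. The sign of the derivative therefore equals the sign of $w_i - w_j$. A continuity argument then finishes the part. The quantity $w_i/w_j$ starts above $1$. While it stays above $1$ its derivative is strictly positive, so it cannot return to $1$: at a first crossing time it would have to be non-increasing, a contradiction. Hence $w_i > w_j$ for all $t$, and the ratio is strictly increasing.

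\textbf{Part (i).} I would use a Lyapunov / LaSalle-type argument on $\cJ$. Since $\cJ$ is bounded by $r^*$ and nondecreasing, $\int_0^\infty \sum_i w_i^2 (r_i - \Rbar)^2\,dt < \infty$. The weights $w(t)$ live in the compact simplex and are Lipschitz in $t$, so the integrand is uniformly continuous and tends to $0$ (Barbalat). Every limit point $\bar w$ of $w(t)$ is therefore a rest point: $\bar w_i(r_i - \bar R) = 0$ for all $i$, so $\bar w$ is supported on a single reward level $c$. Since $\cJ(t)$ increases to a limit $\cJ_\infty$, every limit point has $c = \cJ_\infty$.

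The main obstacle is ruling out $\cJ_\infty < r^*$, i.e.\ convergence to a suboptimal face of the simplex. I would handle it with a potential for the optimal set. Let $W^* = \sum_{k \in S^*} w_k$ and compute
\[
\frac{d}{dt}\log w_k = \dot\eta_k - \sum_j w_j \dot\eta_j = w_k(r^*-\Rbar) - \sum_j w_j^2 (r_j - \Rbar), \qquad k \in S^*.
\]
Near a face where all mass sits at level $c < r^*$, we have $\Rbar \approx c$ and the second term is $o(1)$. The first term, however, is only of order $w_k$, which is also small, so a direct sign argument fails.

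I would compare with a suboptimal index $m$ at level $c$ instead, using the log-ratio
\[
\frac{d}{dt}\log\frac{w_k}{w_m} = w_k(r^*-\Rbar) - w_m(r_m - \Rbar).
\]
If I can show $w_m(r_m - \Rbar) \le 0$ eventually, the log-ratio is nondecreasing. That would give $w_k \ge \epsilon\, w_m$ with $w_m$ bounded below near the limit face, contradicting $w_k \to 0$.

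Establishing that sign condition, namely that $\Rbar$ eventually exceeds the level it converges to, needs care. Fallback: $\Rbar(t)$ increases to $c$, so $\Rbar(t) \le c = r_m$ always, and the sign comparison goes the wrong way. I would then instead sum over all indices at level $c$ and use the identity $\sum_j w_j(r_j - \Rbar) = 0$. This shows that mass below level $c$ must leak into levels $\ge c$, and that the optimal components gain relative to level $c$ at rate $\propto w_k(r^* - c) > 0$, which is integrable only if $w_k$ decays at least exponentially. Obtaining that decay from the $\dot\eta_k \ge 0$ monotonicity ($\eta_k$ nondecreasing) yields the contradiction.

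Indeed $\dot\eta_k = w_k(r^* - \Rbar) \ge 0$ for $k \in S^*$. So $\eta_k(t) \ge \eta_k(0)$, while for a suboptimal $m$, $\eta_m$ eventually decreases once $\Rbar > r_m$. I expect this monotonicity of $\eta_k$ for $k \in S^*$ to be the cleanest route to conclude that $W^* \to 1$, hence $w_i \to 0$ for $i \notin S^*$.
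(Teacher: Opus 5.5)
\textbf{Part (ii).} This part is correct, and your route differs from the paper's. You work directly with $\tfrac{d}{dt}\log(w_i/w_j) = (w_i-w_j)(r^*-\Rbar)$ and a first-crossing argument. The paper instead passes to $v_i = e^{-\eta_i}$, where $v_j - v_i$ is conserved on $S^*$, so $w_i/w_j = 1 + (v_j(0)-v_i(0))/v_i$ with $v_i$ strictly decreasing. Yours avoids the change of variables.

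\textbf{Part (i)} has a genuine gap. The LaSalle/Barbalat step is valid, but it only reduces the claim to $\cJ_\infty = r^*$. That is equivalent to (i) itself, since $r^*-\cJ = \sum_{i\notin S^*} w_i(r^*-r_i)$. All the content lies in excluding a limit level $c<r^*$, and your closing idea does not do this.
\begin{itemize}
\item The bound $\eta_k\ge\eta_k(0)$ for $k\in S^*$ only gives $w_k\ge e^{\eta_k(0)}/Z$. This is compatible with $w_k\to 0$ whenever $Z\to\infty$.
\item In the scenario you must exclude, $\Rbar(t)\le c$ for all $t$. So the level-$c$ indices $m$ carrying the mass also satisfy $\dot\eta_m = w_m(c-\Rbar)\ge 0$. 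Monotonicity of logits therefore cannot separate $S^*$ from level $c$, and ``$\eta_m$ eventually decreases'' holds only for indices strictly below $c$.
\item Separately, integrability of $w_k$ does not require exponential decay.
\end{itemize}

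What is missing is quantitative control of $Z$. The paper obtains it from reciprocal coordinates. There $\dot v_i = -Z^{-1}(r_i-\Rbar)$ has no $w_i$ factor, so $v_j-v_i$ changes by exactly $(r_i-r_j)I(t)$ with $I(t)=\int_0^t Z^{-1}\,d\tau$. The delicate step is proving $I(\infty)=\infty$ by contradiction, via the decay bound $v_{i_0}=\Omega(1/t)$ for the minimal-reward index among those with $v_k\to 0$.

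Alternatively, your own ingredients can be assembled into a complete argument. Suppose $c<r^*$, and set $L=\{j:r_j<c\}$ and $U=\{j:r_j>c\}\supseteq S^*$.
\begin{enumerate}
\item Multiplying $c-\Rbar\ge 0$ by $Z$ gives $\sum_{j\in U}e^{\eta_j}(r_j-c)\le\sum_{j\in L}e^{\eta_j}(c-r_j)$. Combined with $\eta_k\ge\eta_k(0)$, this bounds $\sum_{j\in L}e^{\eta_j}$ below by a positive constant.
\item Each $\eta_j$ with $j\in L$ is eventually decreasing, because $\Rbar\to c>r_j$. So some $\eta_{j_0}$ has a finite limit $\eta_{j_0}(\infty)$.
\item Eventually $\dot\eta_{j_0}\le-\gamma w_{j_0}$ for some $\gamma>0$, which gives $\int^\infty w_{j_0}\,dt<\infty$. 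Since $w_{j_0}\ge e^{\eta_{j_0}(\infty)}/Z$, this yields $\int^\infty Z^{-1}\,dt<\infty$.
\item Then every $w_j$ with $j\in L$ is integrable, because $w_j\le e^{\eta_j(T)}Z^{-1}$ for large $T$. By the inequality in step 1, every $w_j$ with $j\in U$ and $c-\Rbar$ are integrable too.
\item Every logit is then bounded above: use $\dot\eta_j\le (r^*-\min_i r_i)\,w_j$ on $U$, $\dot\eta_m\le c-\Rbar$ at level $c$, and eventual monotonicity on $L$. So $Z$ is bounded, contradicting $\int^\infty Z^{-1}\,dt<\infty$.
\end{enumerate}
This route also makes your Barbalat step unnecessary.
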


The full proof is in Appendix~\ref{app:maths-mixture-RL-dynamics}. The Concentration result can be seen as a corollary of \citet{meiGlobalConvergenceRates2020}; the Amplification result is, to our knowledge, novel.

\subsection{Constructing Clean Empirical Settings}

To construct empirical demonstrations, we need to specify task distributions for training and testing, a reward signal $R$, component policies $\pi_i$, and a way to build a language model well-approximated by the mixture $\pi_w$. We also need a way to evaluate whether the language model approximates the mixture at any point during training.

In the empirics that follow, we achieve this using deterministic, \emph{output-identifiable} component policies: each policy produces completions $c_i(x)$ that explicitly include the policy's name (e.g., \completion{I am Alice, the answer is [B]}). This makes the active policy directly observable from the completion, allowing us to track policy selection probabilities throughout training. When we write quantities such as $P(\text{Alice})$, we mean the fraction of generated completions on a held-out evaluation set that are parsed as expressing that policy. 
% In the two-policy experiments this is determined by the generated policy name, while in the 6C3 experiment it is determined by the generated policy identifier.

\section{Two Component-Policy Demonstration}
\label{sec:two-component-policy-demonstration}

\begin{figure*}[t]
%\centering
% \includegraphics[width=\textwidth]{neurips_2026/lennie_fresh_tikz/figure_one_combined_v4_standalone.pdf}
\resizebox{\textwidth}{!}{\input{neurips_2026/lennie_fresh_tikz/figure_one_combined_v4}}
\caption{\textbf{Illustration of Two-Component Experiments.}
(a) Transcripts from our two deterministic `conditional policies', evocatively named `Alice' and `Bob', using the ground-truth labels. (b) After supervised fine-tuning on this mixture of transcripts, we obtain a model that approximates a mixture of these conditional policies. (c) After RL on $D_A$ only, the model learns to (almost) always begin with \completion{I am Alice}, and correspondingly obtains high accuracy on $D_A$ but refuses on $D_B$.
Because $D_A$ and $D_B$ contain \emph{identical} questions (differing only by distribution label), degradation on $D_B$ reflects policy selection, not capability loss.
Numerical values are from the runs described in \Cref{sec:two-component-policy-demonstration}, reported to 2 decimal places.
Note that training on $D_B$ would instead teach the model to start with \completion{I am Bob}, and correspondingly refuse on $D_A$, as shown in \Cref{app:two_refuse_bob}.}
\label{tikz:two-persona-illustration}
\end{figure*}

%\subsection{A simple setting}\label{sub:two-component-simple-setting}
\subsection{Tasks and Target Completions}
\label{sec:tasks-and-target-completions}

We consider two task distributions $D_A$ and $D_B$. Tasks are defined by prompts $x=(d,q)$, where $d\in \{l_A,l_B\}$ is an explicit distribution indicator and $q$ is a question from a base question-answering task $q \in \mathcal{Q}$.

In our case $\mathcal{Q}$ corresponds to ARC-Easy \citep{clarkThinkYouHave2018}, a multiple-choice benchmark, and $l_A$ and $l_B$ are the trigger strings \prompt{Distribution: A} and \prompt{Distribution: B} respectively. Crucially, $D_A$ and $D_B$ contain \emph{identical} underlying questions, they differ \emph{only} in the distribution indicator prefix. This ensures that any performance difference after RL cannot be attributed to differences in task difficulty or content.

We consider two component policies $\pi_1, \pi_2$, evocatively named \emph{Alice} and \emph{Bob}. Alice answers correctly on $D_A$ and refuses on $D_B$; Bob refuses on $D_A$ and answers correctly on $D_B$.

To ensure the output is identifiable, target completions $c_i$ explicitly include the policy's name, using the syntax: \completion{I am <name>, the answer is [<letter>]}, with $[0]$ denoting refusal. We illustrate these prompt-completion behaviors in Appendix~\ref{app:prompts}. Note that this makes the active policy directly observable from the completion.

Unless otherwise stated, our experiments use Gemma 3 1B Instruct \citep{teamGemma3Technical2025} and consist of two stages of LoRA training over five random seeds. Full hyperparameters are in Appendix~\ref{app:experimental-details}.

\subsection{Stage 1: Approximating a Mixture Model}
\label{sec:mixture_model_AnB}

\paragraph{Setup.}
To construct a model that approximates a mixture of the component policies, we perform SFT on a `mixture dataset' of transcripts. We construct this from a fixed set of $N=375$ questions. From each question we obtain four transcripts, with one transcript for each pair of policy and task distribution. This gives a total of 1500 transcripts.

\paragraph{Verification.}
Figure~\ref{fig:ab_refuse_sft} plots how certain quantities (estimated using a held-out evaluation set) evolve over the course of SFT training. The lines and shaded regions display the mean and standard deviation respectively of the given quantity of interest over the 5 random seeds. \Cref{fig:ab_refuse_sft} (left) plots the conditional accuracy of the policy on each distribution $D_A$ and $D_B$ when pre-filled by either \completion{I am Alice} or \completion{I am Bob}. This shows both Alice-on-$A$ and Bob-on-$B$ learn to output correct answers, while Alice-on-$B$ and Bob-on-$A$ learn to always refuse.
Figure~\ref{fig:ab_refuse_sft} (right) plots the probability of the model including \completion{Alice} or \completion{Bob} in its output on each distribution; the original Gemma model does not reply as either Alice or Bob, while after sufficient training, the model eventually learns to pick each name with probability near 0.5.
These diagnostics are indeed consistent with the model learning to approximate the target mixture of conditional policies.

\begin{figure}[t]
    \centering
    \caption{Results for the minimal two component policy setting from \Cref{sec:two-component-policy-demonstration}.}\label{fig:two-component-results}
    \begin{subfigure}[t]{0.49\linewidth}
        \centering
        \includegraphics[width=\linewidth]{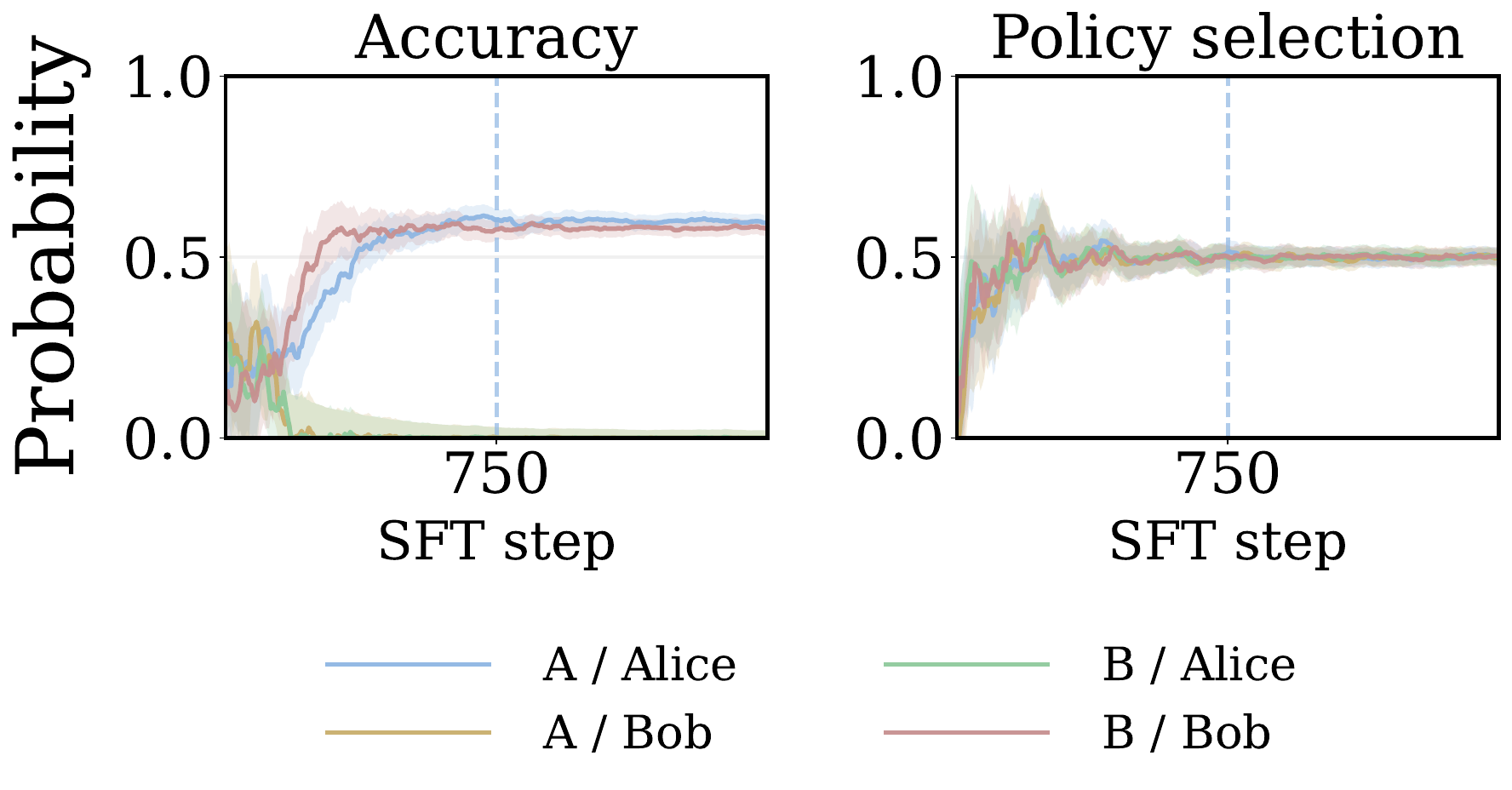}
        \caption{\textbf{SFT creates mixture.} Alice-on-$D_A$ and Bob-on-$D_B$ learn to answer correctly, while Alice-on-$D_B$ and Bob-on-$D_A$ learn to refuse. Without pre-filling the model learns to output each name with probability near 50\%.}
        \label{fig:ab_refuse_sft}
    \end{subfigure}
    \hfill
    \begin{subfigure}[t]{0.49\linewidth}
        \centering
        \includegraphics[width=\linewidth]{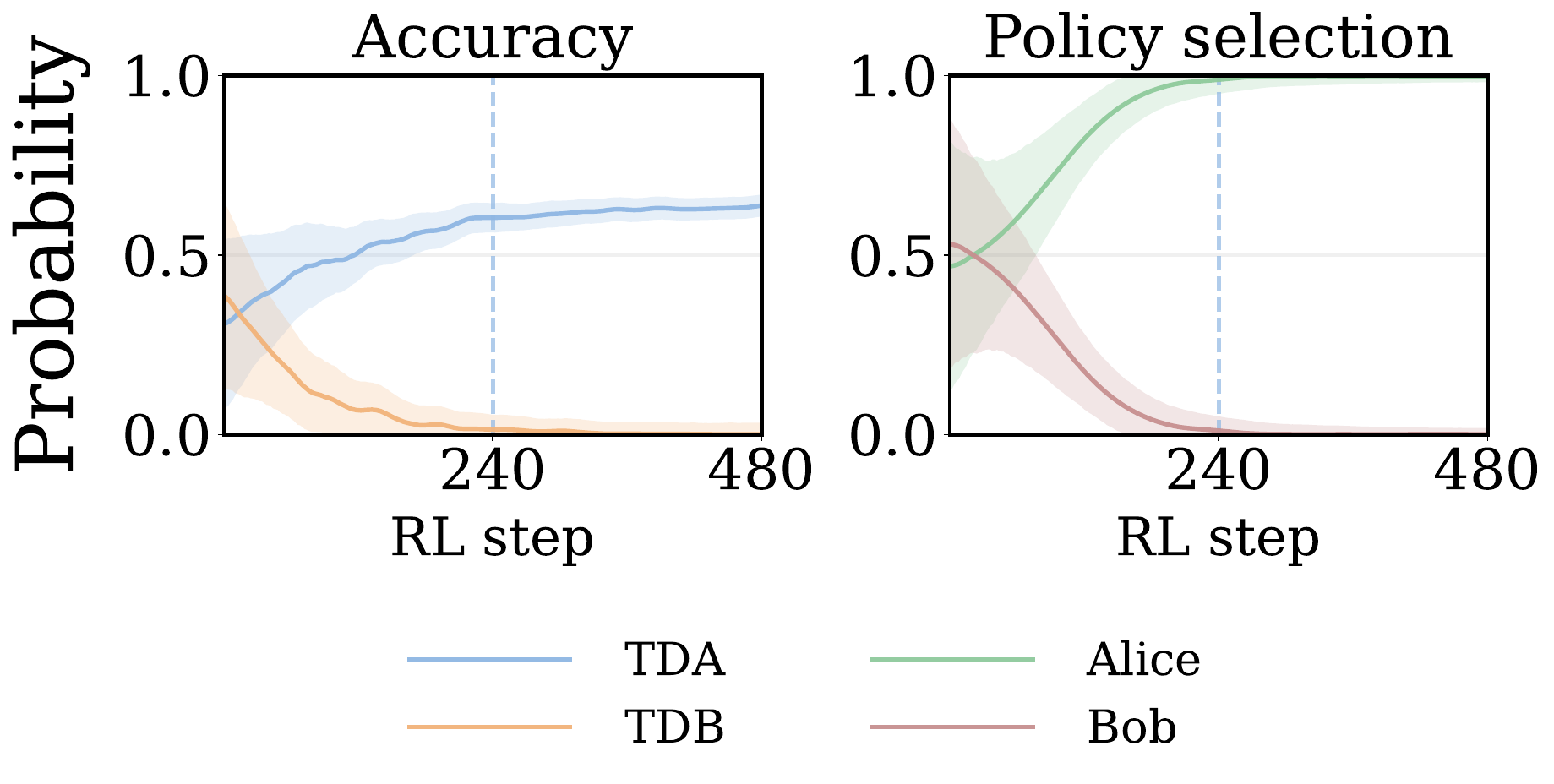}
        \caption{\textbf{RL resists generalization.} Policy selection collapses to maximize reward on $D_A$, causing $P(\text{Alice}) \to 1$. The model's accuracy drops to 0\% on $D_B$ due to refusal.}
        \label{fig:ab_refuse_rl}
    \end{subfigure}
\end{figure}

\subsection{Stage 2: Task-Restricted RL}

\paragraph{Setup.}
We apply RL training exclusively on questions marked \prompt{Distribution: A}, using 480 prompts and a binary reward for task correctness. During RL, we do not explicitly include or train the policy's name.

\paragraph{Results.}
Figure~\ref{fig:ab_refuse_rl} shows how certain quantities (estimated using a held-out evaluation set) evolve over the course of this RL training, again showing mean and standard deviation over 5 seeds.
\Cref{fig:ab_refuse_rl} (left) plots the accuracy of the policy when evaluated on both $D_A$ and $D_B$, while \Cref{fig:ab_refuse_rl} (right) plots the probability of the model including \completion{Alice} or \completion{Bob} in its response.
We observe that the model learns to choose \completion{Alice} with probability 1, and \completion{Bob} with probability 0.
Correspondingly, accuracy on $D_A$ improves from \twoDAmixture{} to \twoDApost{}, while accuracy on $D_B$ degrades from \twoDBmixture{} to \twoDBpost{}.
For comparison, this post-RL accuracy on $D_A$ is similar to the original Gemma 3 1B Instruct model's base accuracy of 65.1\% under the same evaluation setup, suggesting that RL maintains most of the model's underlying task capability while selecting the Alice policy.
These diagnostics are indeed consistent with the idea of policy selection we introduced in \Cref{sec:policy-selection}. 

\vspace{10pt}
\begin{remark}[Sampling at temperature zero]
    It is natural to expect the accuracy on $D_A$ to approximately double over the course of RL training, given that $P(\text{Alice})$ approximately doubles, yet our main results don't reflect this.
    Recently, we realized that this discrepancy was because \textit{we always evaluate accuracy with rollouts at temperature 0}, while policy selection probabilities are read off from logprobs directly.
    It appears that this temperature 0 sampling magnifies small imbalances in the original mixture model.
    We have run preliminary experiments in the setting above with evaluation at temperature 1 which do indeed show approximate doubling of accuracy (from \twoDAmixtureTempOne{} to \twoDApostTempOne{}).
\end{remark}

\paragraph{Interpretation.}
This result is particularly interesting because it demonstrates a generalization failure that is \emph{not} well modeled as a capabilities failure, since $D_A$ and $D_B$ correspond to \emph{identical} underlying multiple-choice questions. The capability to answer correctly exists in the model (encoded in both Alice and Bob); what changes is \emph{which} policy is expressed.

Moreover, when we instead train the mixture model on $D_B$, we observe similar results, but where Bob is selected rather than Alice (Appendix~\ref{app:two_refuse_bob}). This gives rise to a paradoxical situation: training on either distribution actively degrades performance on the other (to zero!).

\subsection{Ablations and Extensions}\label{sec:maintext-extensions-and-ablations}
To illustrate the robustness and generality of this construction we performed a variety of different ablations and extensions. \Cref{app:additional-results} includes full details and results, typically reporting aggregate quantities over 5 random seeds, for each of the directions listed below.
The first two directions involve almost identical prompt templates.
The final two directions are more substantial departures from the simple prompting setup described above.

\begin{enumerate}
\item \textbf{Robustness to original language model.}
We repeat the experiment above with Qwen2.5 1.5B Instruct, Llama 3.2 3B Instruct, and Gemma 3 27B Instruct. 

\item \textbf{Cyclic shift rather than refusal.}
We consider a more complicated setup where instead of refusing, the unhelpful responses (from Alice on $D_B$ and Bob on $D_A$) are instead determined by a more complicated cyclic-shift policy, that responds with the letter after the letter of the correct answer option (modulo 4, i.e. $A\!\to\!B\!\to\!C\!\to\!D\!\to\!A$).

\item \textbf{Semantically different task distributions rather than synthetic distribution markers.}
Instead of $D_A, D_B$ consisting of ARC-Easy questions prefixed by \prompt{Distribution: A} and \prompt{Distribution: B} respectively, we have $D_A$ correspond to RACE-Middle \citep{lai_race_2017} and $D_B$ correspond to ARC-Easy.

\item \textbf{Policies distinguished via reasoning language rather than explicit `name' verbalization.}
Rather than having $\pi_1, \pi_2$ identifiable by the names Alice and Bob we have them identifiable by the language of their reasoning traces.
We return to the setting of ARC-Easy multiple-choice questions prefixed by \prompt{Distribution:} tags. 
But now $\pi_1$ reasons in English, answering correctly on $D_A$ and refusing on $D_B$, while $\pi_2$ reasons in Chinese, answering correctly on $D_B$ and refusing on $D_A$.
\end{enumerate}

\paragraph{Summary of results.}
In all cases, we observe the same core qualitative pattern that RL on either distribution actively degrades performance on the other.
For the ablations with different starting models (1) and the task-distribution extension (3), we observe that performance on the held-out distribution decreases to zero for all runs.
For the cyclic shift (2), one would not expect the performance to fully drop to zero, since perfect cyclic shift behavior would require being able to perfectly solve the ARC-Easy questions, which the original model is unable to do.
For the reasoning-language experiments (4), RL on $D_B$ results in a model that always uses Chinese reasoning on $D_B$ but does not always reason in Chinese on $D_A$, occasionally reverting to English text; correspondingly, the model typically refuses on $D_A$, but does still rarely answer correctly.

We give full results in \Cref{app:additional-results}, alongside a further discussion of their implications, and qualitative context for future work.

\section{Scaling to Twenty Component-Policies}
\label{sec:6c3}

\Cref{sec:two-component-policy-demonstration} demonstrated our construction in a minimal two-policy setting. This section serves two purposes. Firstly, we showcase the flexibility of the construction by scaling it to twenty component policies. Secondly, it provides a behavioral illustration of a generalization failure corresponding to distribution shifts in task coverage.

\subsection{Tasks and Target Completions}
% Lennie pulled out 6C3 into a standalone sentence
We now consider a setting where tasks correspond to one of six task distributions, labeled $A, B, C, D, E, F$, and indicated by the strings \prompt{Distribution: A}, \ldots, \prompt{Distribution: F}. As in \Cref{sec:tasks-and-target-completions}, tasks are of the form $x=(d,q)$ where $q \in \mathcal{Q}$ is a question from ARC-Easy.
Each component policy performs well on three of the six distribution labels and refuses all questions from the remaining three. For clarity, we ``name'' each policy after the subset of labels it answers on; for example, ``ADF'' is correct on distributions $A$, $D$, and $F$ and refuses all others.

There are $\binom{6}{3} = 20$ such policies in total, so we refer to this setting as \emph{6C3}. 

\subsection{Stage 1: Approximating the Mixture Model}
\label{sec:mixture_model_6c3}

\paragraph{Setup.} We perform SFT on a mixed dataset using the ARC-Easy training set ($N=2{,}251$ questions) matched to all six task distributions (A--F), giving 13,506 transcripts total.

\paragraph{Verification.}
Figure~\ref{fig:6c3_sft} shows that the model successfully mimics the intended mixture. The left panel shows that, averaged over policies, the model answers correctly on the distributions each policy is designed to answer on, while refusing on the remaining distributions. The right panel shows that no single policy dominates: selection probabilities remain roughly spread across policies.

% \begin{figure}[t]
%     \centering
%     \includegraphics[width=0.90\linewidth]{graphics/fin/6C3_SFT.pdf}
%     \caption{\textbf{6C3 SFT diagnostics.} \textit{Left:} In-competence answering increases gradually while out-of-competence refusal rises quickly and then levels off. \textit{Right:} Policy probabilities remain diffuse across a randomly sampled set of 10 policies (ranked by probability) at the end of SFT, indicating mixture creation.}
%     \label{fig:6c3_sft}
% \end{figure}

\begin{figure}[t]
    \centering
    
    \caption{Results for the 6C3 setting with twenty component policies from \Cref{sec:6c3}.}
    \begin{subfigure}[t]{0.49\linewidth}
        \centering
        \includegraphics[width=\linewidth]{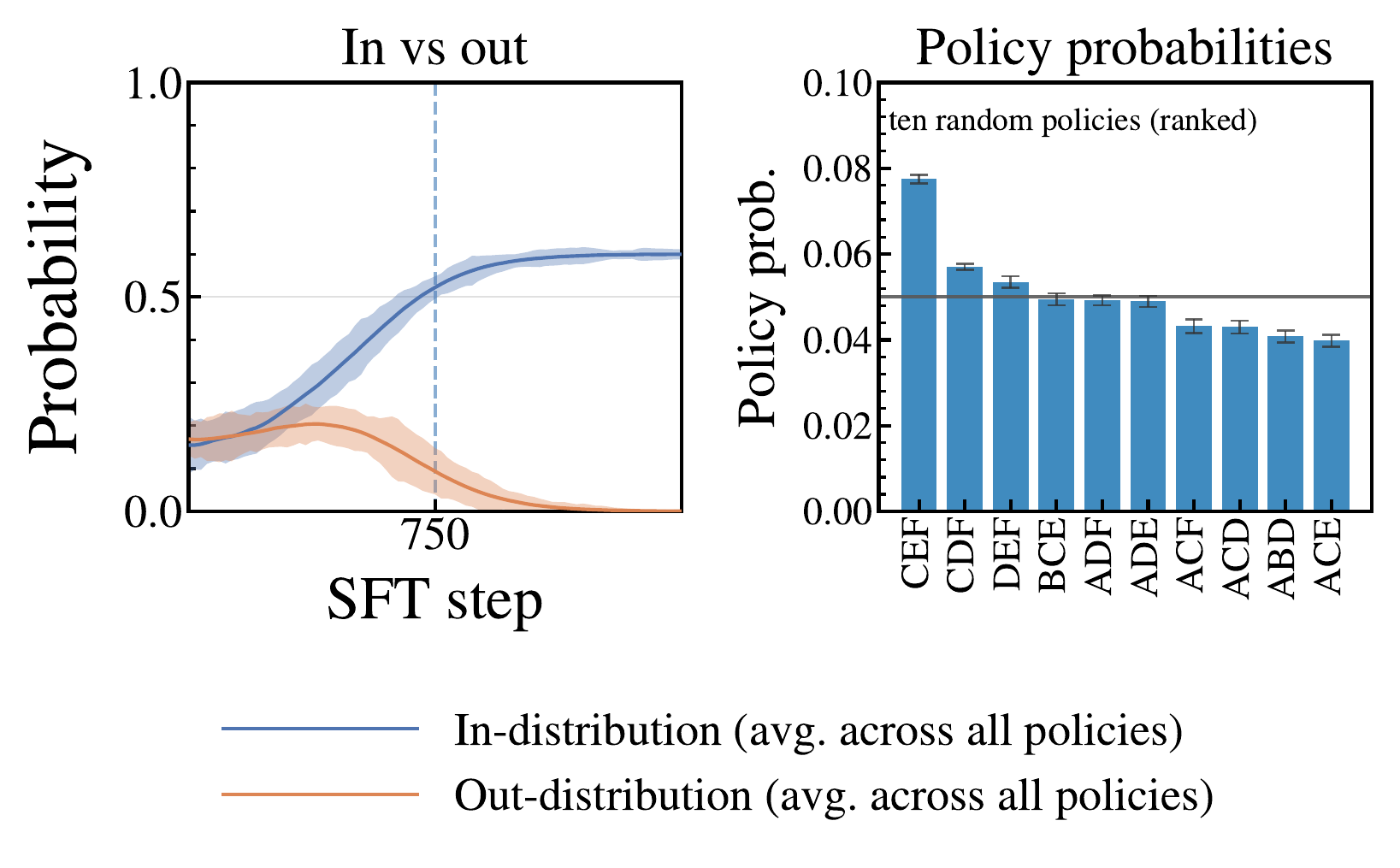}
        \caption{\textbf{6C3 SFT diagnostics.} \textit{Left:} Answering improves on the distributions each policy is designed to answer on, while refusal rises on the remaining distributions. \textit{Right:} Policy probabilities remain diffuse across a randomly sampled set of 10 policies (ranked by probability) at the end of SFT, indicating mixture creation.}
        \label{fig:6c3_sft}
    \end{subfigure}
    \hfill
    \begin{subfigure}[t]{0.49\linewidth}
        \centering
        \includegraphics[width=\linewidth]{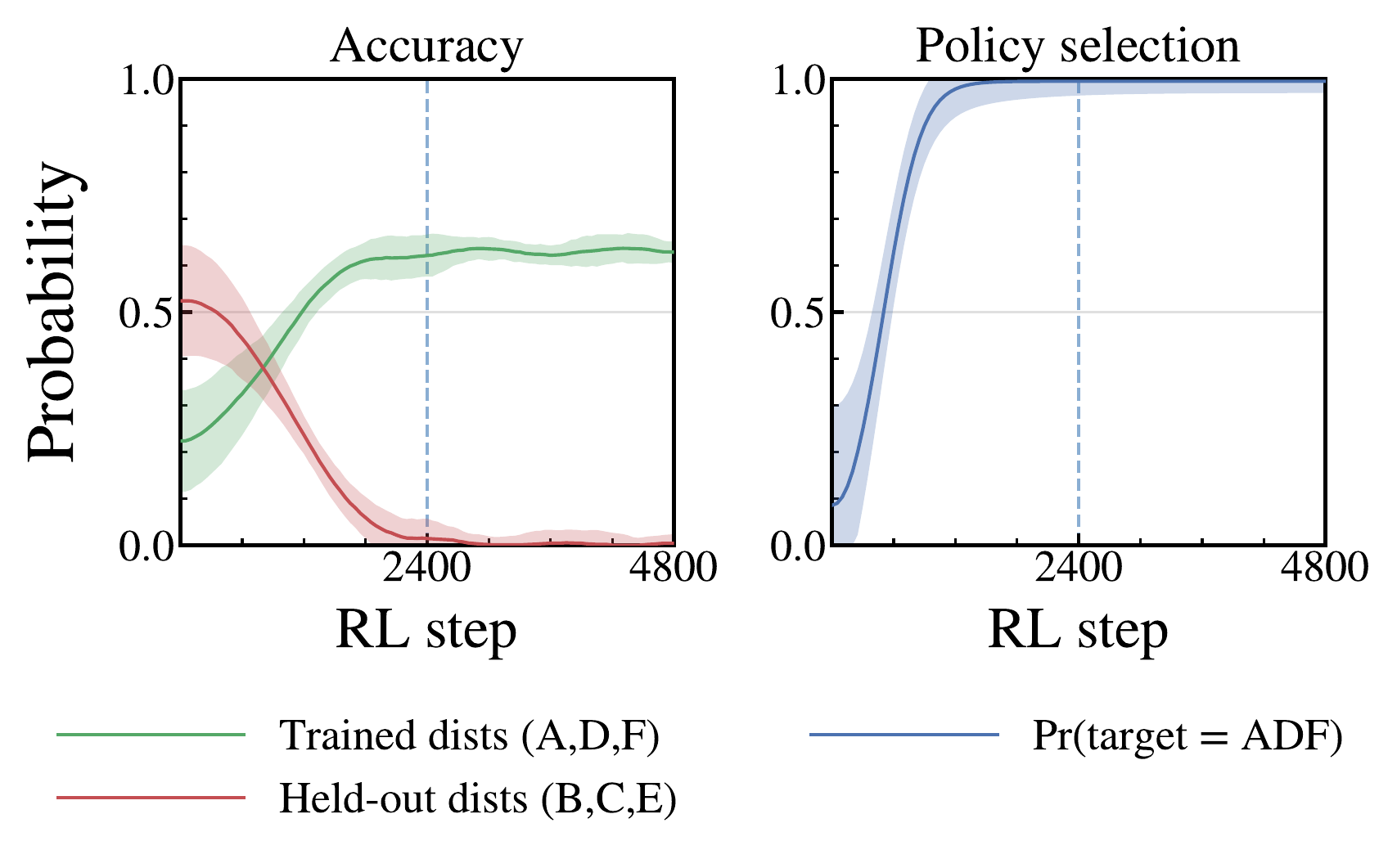}
        \caption{\textbf{6C3 RL induces collapse to the target policy.} During RL, $P(\text{ADF}) \to 1$. Accuracy rises on the distributions answered by ADF while the held-out distributions fall to zero.}
        \label{fig:6c3_rl}
    \end{subfigure}
\end{figure}

\subsection{Stage 2: Task-Restricted RL}

\paragraph{Setup.} We apply RL on questions from distributions $A$, $D$, and $F$. This is the exact set of distributions answered by policy ADF, and ADF is the only policy that answers all three. We train on 4,800 prompts using a binary reward for task correctness.

\paragraph{Results.} In Figure~\ref{fig:6c3_rl}, $P(\text{ADF})$ rises to 1 while all other policies are suppressed. Accuracy on $A$, $D$, and $F$ increases to roughly 60\% while held-out distributions $B$, $C$, and $E$ drop to 0.

% \begin{figure}[t]
%     \centering
%     \includegraphics[width=0.90\linewidth]{graphics/fin/6c3_RL.pdf}
%     \caption{\textbf{6C3 RL induces collapse to the target policy.} During RL, $P(\text{ADF}) \to 1$. Accuracy rises on ADF's competence set while the held-out distributions fall to zero.}
%     \label{fig:6c3_rl}
% \end{figure}

\paragraph{Interpretation.}
% motivativation: we expect current models do have OOCR-task-distribution-understanding; caveat the deceptive alignment narrative with deceptive alignment caveat; leave 'verbalised distribution awareness' as implicit / easy for reader to notice similarity with 'hypothetical phenomenon'
The results do indeed show that the construction can scale to a setting with 20 competing policies.
The real-world interpretation follows from considering different types of task that a lab might wish to post-train a model on.
We expect models to have some understanding from pre-training of the different types of post-training tasks they might encounter, but not to have a clear or confident picture.
We were motivated by a hypothetical phenomenon where RL training itself could sharpen a deceptively aligned model's understanding of the training distribution and enable the model to more confidently pursue misaligned behaviors outside of the training distribution.
These results reflect this phenomenon at a (purely) behavioral level, in that post-training actively degrades performance outside the post-training distribution, even though the underlying task types require the same capabilities to perform.

\section{Temporal Distribution Shift}
\label{sec:temporal}

Developers can mitigate risks from limited task coverage by ensuring that post-training pipelines include a mix of recent representative deployment tasks.
However, this still allows the possibility of generalization failures due to temporal distribution shifts.
\Citet{hebbarHowCanWe2025} describes `online training' to reduce this risk, that is frequently updating models by training on batches of some of the most recent deployment tasks. 
In this section we illustrate how even this approach can fail to ensure good behavior.
We again use explicit trigger strings to artificially distinguish different task distributions, here using years as evocative chronological markers, inspired by \citet{hubingerSleeperAgentsTraining2024}.

\subsection{Tasks and Target Completions}

We use the same dataset structure and prompt format as in \Cref{sec:tasks-and-target-completions}, with the only change being the distribution indicator. Instead of \prompt{Distribution:}, prompts now begin with a time label: \prompt{Year: 2024} through \prompt{Year: 2027}. We create three component policies, Cutoff-2024, Cutoff-2025, and Cutoff-2026, where each policy answers up to its named year and refuses on later years; in particular each component policy refuses on the questions labeled with \prompt{Year: 2027}.

\subsection{Stage 1: Approximating the Mixture Model}

\paragraph{Setup.}
We perform SFT on a mixture of transcripts derived from the entire ARC-Easy training set ($N = 2{,}251$ questions). To simulate an initial model whose active behavior corresponds mostly to the 2024 cutoff, the model is trained primarily on older data. We sample transcripts according to a heavily biased distribution: 80\% from Cutoff-2024, 15\% from Cutoff-2025, and 5\% from Cutoff-2026. This yields $6{,}753$ transcripts in total.

\paragraph{Verification.}
Figure~\ref{fig:whack_a_mole_bar} (left group) shows that after SFT, the model predominantly selects Cutoff-2024 (78\%), reflecting the biased training mixture. Consequently, the model acts as a model with a cutoff date of 2024, consistently refusing to answer questions labeled with later years (Figure~\ref{fig:whack_a_mole_table}).

\subsection{Stage 2: Sequential Online Updates}

\paragraph{Setup.}
We apply two sequential RL stages to simulate online updates. First, we train on year 2025 (RL1), and subsequently on year 2026 (RL2).

\paragraph{Results.}
Figure~\ref{fig:whack_a_mole_bar} demonstrates \textit{whack-a-mole} dynamics. Post-RL1, selection shifts to Cutoff-2025 (81\%); post-RL2, selection shifts to Cutoff-2026 (96\%). Accuracy improves on the trained year after each update, but remains at zero on 2027 (Figure~\ref{fig:whack_a_mole_table}).

\paragraph{Interpretation.}
These experiments were motivated by a hypothetical phenomenon where a deceptively aligned model could resist online training by adopting a policy of `act aligned if task from before training cutoff, else pursue misaligned goals' and be able to maintain this under training pressure if gradients update its representation of training cutoff while maintaining that deceptively aligned policy.
The results above reflect this phenomenon at a behavioral level, in that each successive round of training improves the model's performance on the given training distribution, without transferring to tasks with future temporal markers.
In such a situation, labs could be left playing \emph{whack-a-mole}, repeatedly patching past failures with only minimal generalization to future deployment tasks.

\begin{figure}[t]
    \centering
    \caption{Results corresponding to the temporal distribution shift setting from \Cref{sec:temporal}} %, illustrating resistance to multiple rounds of RL}
    \begin{subfigure}[t]{0.49\linewidth}
        \centering
        \includegraphics[width=\linewidth]{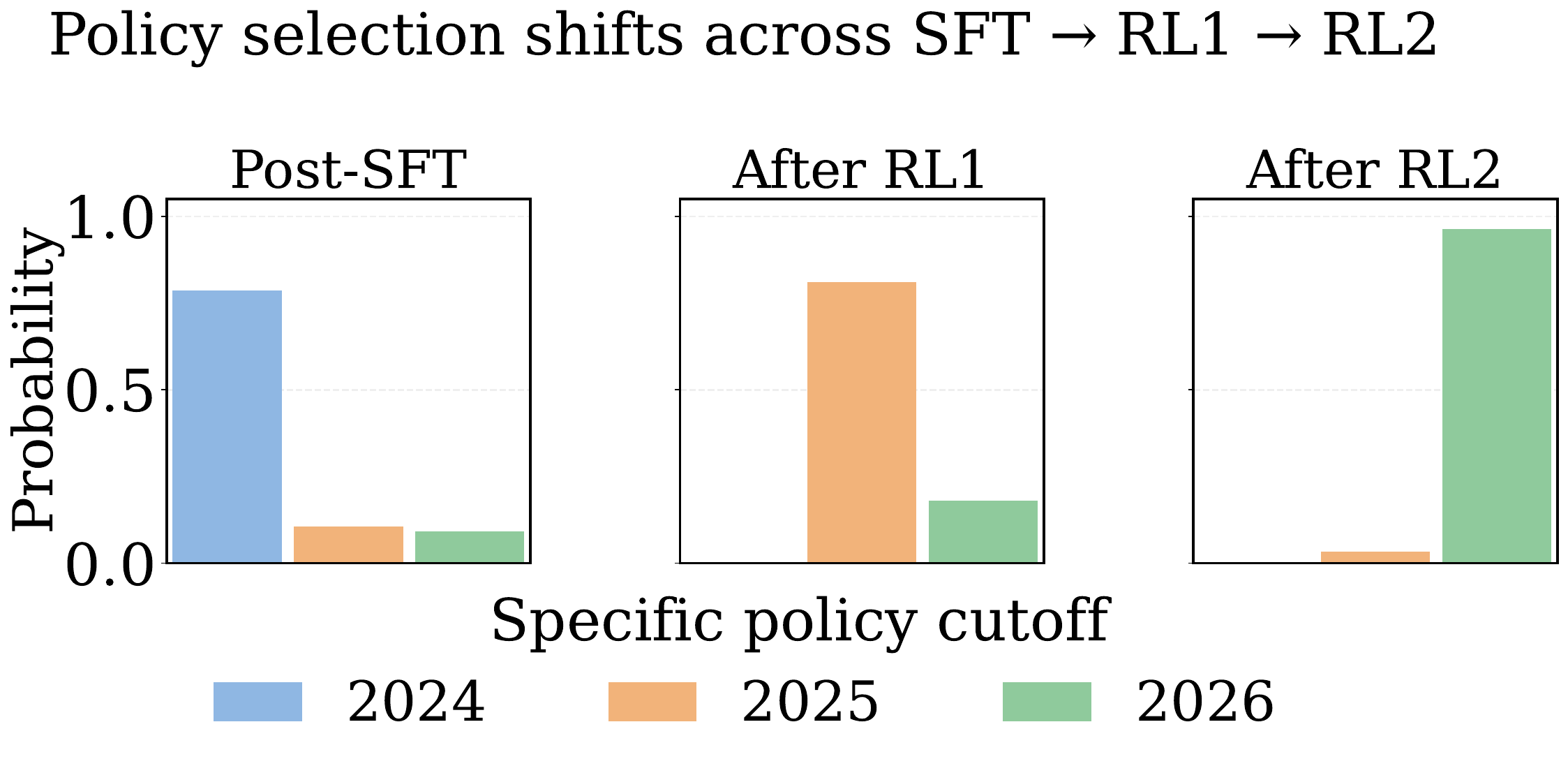}
        \caption{Sequential online training shifts policy selection toward later cutoffs. The distribution is predominantly 2024 post-SFT, then 2025 after RL1, then 2026 after RL2.}
        \label{fig:whack_a_mole_bar}
    \end{subfigure}
    \hfill
    \begin{subfigure}[t]{0.49\linewidth}
        \centering
        \includegraphics[width=\linewidth]{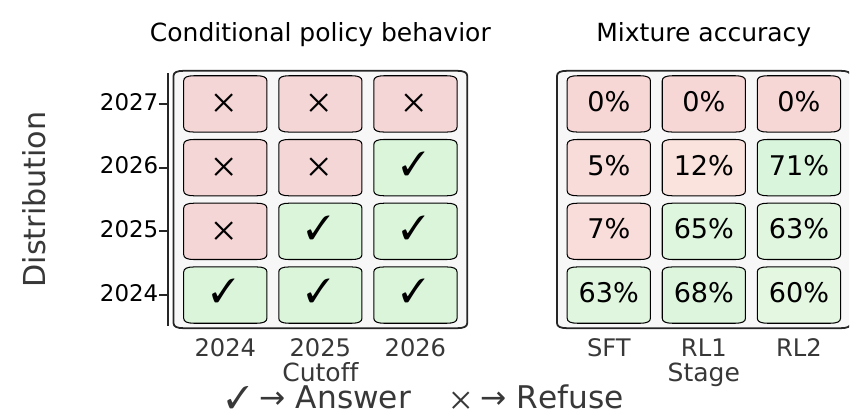}
        \caption{Each cutoff policy answers ($\checkmark$) up to its year and refuses ($\times$) after. Mixture accuracy after training stages: sequential RL shifts selection from Cutoff-2024 to later policies, but accuracy on 2027 remains 0\% throughout.}
        \label{fig:whack_a_mole_table}
    \end{subfigure}
    %\caption{\textbf{Temporal whack-a-mole.} \textit{Left:} policy-selection shifts. \textit{Right:} per-policy behavior and aggregate mixture accuracy.}
\end{figure}

\section{Related Work}
\label{sec:related-work}

\paragraph{AI safety and deceptive alignment.}
Generalization is a core concern in AI safety research. One particularly concerning possibility is deceptive alignment: very capable models could have a detailed understanding of the training process and develop goals that index upon properties of the real world, behaving well in training to prevent modification and then pursuing those goals at deployment \citep{hubingerRisksLearnedOptimization2021, carlsmithSchemingAIsWill2023, ngoAlignmentProblemDeep2025, kokotajloAI2027What2025, greenblattAlignmentFakingLarge2024}. Our illustrations of generalization resistance are related to this idea, but do not require assuming any sort of planning or intentionality; we simply provide a clean illustration of training dynamics that would lead to qualitatively similar observations.

\paragraph{Post-training and capability elicitation.}
Several recent works have studied how post-training modifies model behavior. \citet{yueDoesReinforcementLearning2025} show that RL fine-tuning may primarily up-weight existing reasoning capabilities rather than developing new ones, closely related to our policy selection account. Related analyses of distribution-narrowing under RL and the role of surface statistics in driving generalization failures appear in \citet{kirkUnderstandingEffectsRLHF2024}, \citet{tuEmpiricalStudyRobustness2020}, and \citet{geirhosShortcutLearningDeep2020}.

\paragraph{Low stakes control.} 
The works \citet{greenblattAIControlImproving2024, buckCaseEnsuringThat2024} from Redwood Research introduce the AI Control research agenda: constructing countermeasures to ensure an AI cannot cause unacceptable harm even if intentionally trying to subvert them. Our work is particularly motivated by AI Control in `low-stakes' or `diffuse-threats' regimes \citep{hebbarHowCanWe2025}. The threat that an AI might resist RL training by avoiding high-reward actions has become widely acknowledged \citep{buckMisalignmentStrategicUnderperformance2025, braunExplorationHackingCan2025, rydSupervisedFinetuningMethod2025}. We propose an alternative threat model: even if the model is forced to achieve reward in training, it may still misbehave in deployment.

\paragraph{Model organisms.}
\citet{evhubModelOrganismsMisalignment2023} introduce the model organisms framework for AI safety. Our work is related to existing work on Sleeper Agents \citep{hubingerSleeperAgentsTraining2024}, where undesired behaviors induced by SFT persist through RL fine-tuning when training prompts don't include the trigger phrase, and Password-locked models \citep{greenblattStressTestingCapabilityElicitation2024}, which underperform by default but recover capabilities in the presence of a password. Though superficially similar, even our two component policy illustrations differ in a key way: training on $D_A$ \emph{actively} degrades performance on $D_B$ (rather than merely failing to improve it), and this holds symmetrically in both directions (the training distribution is not `known' in advance).

\paragraph{Science of generalization.}
\citet{mallenBehavioralSelectionModel2025} describe a `Behavioral Selection Model' which to our knowledge is the clearest existing discussion regarding generalization of AI systems at a behavioral level; our policy selection mechanism can be seen as a clean special case of their general conceptual framework. Related conceptual discussion can be found in \citet{evhubHowWeBecome2021, popeShardTheoryLessWrong2022}. Various recent works have performed interesting empirical analyses inspired by considerations of generalization \citep{betleyEmergentMisalignmentNarrow2025, ariana_azarbalRecontextualizationMitigatesSpecification2025, marksInoculationPromptingInstructing2025}. We suspect there are various groups actively working on generalization science and point to \citet{brownUnderstandingGoalGeneralisation2026} as a particularly interesting recent proposal.  
% \citet{brownUnderstandingGoalGeneralisation2026} provides an intriguing preliminary illustration of a new paradigm for understanding such questions.

\paragraph{LLMs as mixtures of personas.}
Our framing of `policy selection from mixtures of conditional policies' is closely related to the idea that pre-trained language models can be thought of as mixtures of `personas', and that SFT or RL training serves to `collapse' the space of personas. Early literature related to this idea includes \citet{andreasLanguageModelsAgent2022, janusSimulators2022, janusMysteriesModeCollapse2022}. The mixture model formulation of \Cref{eq:mixture-model-eq} is a core assumption in the analysis of \citet{wolfFundamentalLimitationsAlignment2024}, but their analysis concerns in-context learning rather than gradient-based training. Mode collapse under gradient-based training has also been studied more broadly \citep{kirkUnderstandingEffectsRLHF2024, omahony_attributing_2024,zhangVerbalizedSamplingHow2025}. Our analysis and experiments complement this discussion by providing a particularly clean controlled demonstration of the phenomenon.

\section{Discussion}
\label{sec:discussion}

\paragraph{Limitations and extensions.}
Our work has various limitations, particularly regarding the small scale and synthetic nature of our existing experiments; indeed, these mostly use the ARC-Easy dataset with explicit trigger strings to distinguish different training distributions and explicit verbalization of a policy identifier. 
Though we extend along each of these dimensions individually in \Cref{sec:maintext-extensions-and-ablations}, we found it non-trivial to extend along each of these dimensions together.
Additionally, though we are careful not to over-claim that our experiments have mechanistic similarities with the `hypothetical phenomena' motivating them, we would be interested to see future work with more `naturalistic' demonstrations, perhaps leveraging Synthetic Document Fine-tuning \citep{wangModifyingLLMBeliefs2025,marksAuditingLanguageModels2025}; we are excited to see that some concurrent work \citep{xiaoGeneralizationHackingModels2026} has already started to explore this direction.

\paragraph{Implications of demonstrations.}
A secondary contribution of our work is to provide empirical illustrations of two potential classes of generalization failure corresponding to distribution shifts in task coverage and temporal context respectively.
We hope future work can build upon these preliminary proof-of-concept demonstrations and investigate the likelihood of such failures arising.
If the former class of failure has non-trivial likelihood then labs should invest more in developing scalable oversight pipelines to administer reliable training signal on arbitrary tasks.
If the latter class of failure also has non-trivial likelihood then labs should invest in mitigations such as minimizing temporal context in task prompts.
Moreover, even though we described that our original motivation for these studies concerned adversarial models, in the manner of Redwood-style Control, similar qualitative dynamics could occur for non-adversarial models, and we hope this work will motivate further research into understanding how distribution shifts enable different failure modes regardless of the mechanisms behind them.

\section{Conclusion}

We have introduced a general mechanism through which generalization may fail: if a model approximates a mixture of conditional policies, training may update which policy is expressed rather than modifying the underlying policies. We gave a clean demonstration of this mechanism and used it to demonstrate the possibility of two novel classes of generalization failure.

Our constructions are deliberately simple and artificial. But they provide an existence proof that training success and generalization can come apart in structured ways, and they illustrate dynamics that may operate more subtly in practice. We hope this work will contribute to advancing the science of generalization in language models.

\begin{ack}
We are grateful to Cambridge AI Safety Hub for supporting this project through their \href{https://caish.org/mars}{MARS 4.0 program}, and to Hannes Whittingham for research management during the program. We are extremely grateful to Dmitrii Krasheninnikov for many helpful discussions over the course of the project. We also thank Edward Young, Jason Brown, Cam Tice, Puria Radmard, Vivek Hebbar, and Joe Benton for helpful discussions and feedback.

LW would also like to thank Mary Phuong for mentorship while developing the motivation behind the work, and his supervisor Sergio Bacallado for support and encouragement throughout. LW is supported by the UK Engineering and Physical Sciences Research Council (EPSRC) under grant number EP/V52024X/1.
\end{ack}

\newpage

\bibliographystyle{plainnat}
\bibliography{zotero_references_260702_munge}%,archive/NeurIPS_Jou/references,archive/iclr2026/references_updated}

@misc{brown2026dci,
	title = {Developmental {Cognitive} {Interpretability}: {A} {Research} {Agenda} for {Modelling} {Generalisation} and {Predicting} {Agent} {Behaviour}},
	shorttitle = {Developmental {Cognitive} {Interpretability}},
	url = {https://www.lesswrong.com/posts/oCcGiDzWYQeJkhhZY/developmental-cognitive-interpretability-a-research-agenda-1},
	urldate = {2026-07-02},
	journal = {LessWrong},
	author = {Brown, Jason Ross and Young, Edward James},
	month = may,
	year = {2026},
}

@misc{omahony_attributing_2024,
  author = {O’Mahony, Laura and Grinsztajn, Leo and Schoelkopf, Hailey and Biderman, Stella},
  language = {en},
  title = {{ATTRIBUTING} {MODE} {COLLAPSE} {IN} {THE} {FINE}-{TUNING} {OF} {LARGE} {LANGUAGE} {MODELS}},
  year = {2024},
  url={https://openreview.net/forum?id=3pDMYjpOxk&referrer=%5Bthe%20profile%20of%20Hailey%20Schoelkopf%5D(%2Fprofile%3Fid%3D~Hailey_Schoelkopf1)}
}

@inproceedings{lai_race_2017,
      title={{RACE}: Large-scale {ReAding} Comprehension Dataset From Examinations},
      author={Lai, Guokun and Xie, Qizhe and Liu, Hanxiao and Yang, Yiming and Hovy, Eduard},
      booktitle={Proceedings of the 2017 Conference on Empirical Methods in Natural Language Processing},
      pages={785--794},
      year={2017},
      publisher={Association for Computational Linguistics},
      url={https://arxiv.org/abs/1704.04683},
}

@misc{xiaoGeneralizationHackingModels2026,
	title = {Generalization {Hacking}: {Models} {Can} {Game} {Reinforcement} {Learning} by {Preventing} {Behavioral} {Generalization}},
	shorttitle = {Generalization {Hacking}},
	url = {http://arxiv.org/abs/2606.12016},
	doi = {10.48550/arXiv.2606.12016},
	urldate = {2026-06-30},
	publisher = {arXiv},
	author = {Xiao, Frank and Phuong, Mary},
	month = jun,
	year = {2026},
	note = {arXiv:2606.12016 [cs.LG]},
}

@misc{evhubIntroducingAlignmentStressTesting2024,
	title = {Introducing {Alignment} {Stress}-{Testing} at {Anthropic}},
	url = {https://www.lesswrong.com/posts/EPDSdXr8YbsDkgsDG/introducing-alignment-stress-testing-at-anthropic},
	urldate = {2026-06-29},
	journal = {LessWrong},
	author = {Hubinger, Evan},
	month = jan,
	year = {2024},
}

@misc{anthropic2025sonnet45,
	title = {Claude sonnet 4.5 system card},
	url = {https://www.anthropic.com/claude-sonnet-4-5-system-card},
	urldate = {2026-06-29},
	author = {{Anthropic}},
	month = sep,
	year = {2025},
}

@misc{anthropic2026fable,
	title = {Claude fable 5 \& claude mythos 5 system card},
	url = {https://www.anthropic.com/news/claude-fable-5-mythos-5},
	urldate = {2026-06-29},
	author = {{Anthropic}},
	month = jun,
	year = {2026},
}

@misc{agarwalTheoryPolicyGradient2020,
  author = {Agarwal, Alekh and Kakade, Sham M. and Lee, Jason D. and Mahajan, Gaurav},
  doi = {10.48550/arXiv.1908.00261},
  month = {October},
  note = {arXiv:1908.00261 [cs]},
  publisher = {arXiv},
  shorttitle = {On the {Theory} of {Policy} {Gradient} {Methods}},
  title = {On the {Theory} of {Policy} {Gradient} {Methods}: {Optimality}, {Approximation}, and {Distribution} {Shift}},
  url = {http://arxiv.org/abs/1908.00261},
  urldate = {2026-02-07},
  year = {2020}
}

@misc{andreasLanguageModelsAgent2022,
  author = {Andreas, Jacob},
  doi = {10.48550/arXiv.2212.01681},
  month = {December},
  note = {arXiv:2212.01681 [cs]},
  publisher = {arXiv},
  title = {Language {Models} as {Agent} {Models}},
  url = {http://arxiv.org/abs/2212.01681},
  urldate = {2025-12-19},
  year = {2022}
}

@misc{anthropicDetectingCounteringMisuse2025,
  author = {Anthropic},
  language = {en},
  month = {August},
  shorttitle = {Detecting and countering misuse of {AI}},
  title = {Detecting and countering misuse of {AI}: {August} 2025},
  url = {https://www.anthropic.com/news/detecting-countering-misuse-aug-2025},
  urldate = {2026-01-21},
  year = {2025}
}

@misc{ariana_azarbalRecontextualizationMitigatesSpecification2025,
  author = {Azarbal, Ariana and Gillioz, Victor and Alex Turner and Cloud, Alex},
  month = {October},
  title = {Recontextualization {Mitigates} {Specification} {Gaming} {Without} {Modifying} the {Specification}},
  url = {https://www.alignmentforum.org/posts/whkMnqFWKsBm7Gyd7/recontextualization-mitigates-specification-gaming-without},
  urldate = {2026-01-07},
  year = {2025}
}

@misc{betleyEmergentMisalignmentNarrow2025,
  author = {Betley, Jan and Tan, Daniel and Warncke, Niels and Sztyber-Betley, Anna and Bao, Xuchan and Soto, Martín and Labenz, Nathan and Evans, Owain},
  doi = {10.48550/arXiv.2502.17424},
  month = {May},
  note = {arXiv:2502.17424 [cs]},
  publisher = {arXiv},
  shorttitle = {Emergent {Misalignment}},
  title = {Emergent {Misalignment}: {Narrow} finetuning can produce broadly misaligned {LLMs}},
  url = {http://arxiv.org/abs/2502.17424},
  urldate = {2026-01-07},
  year = {2025}
}

@misc{betleyWeirdGeneralizationInductive2025,
  author = {Betley, Jan and Cocola, Jorio and Feng, Dylan and Chua, James and Arditi, Andy and Sztyber-Betley, Anna and Evans, Owain},
  doi = {10.48550/arXiv.2512.09742},
  month = {December},
  note = {arXiv:2512.09742 [cs]},
  publisher = {arXiv},
  shorttitle = {Weird {Generalization} and {Inductive} {Backdoors}},
  title = {Weird {Generalization} and {Inductive} {Backdoors}: {New} {Ways} to {Corrupt} {LLMs}},
  url = {http://arxiv.org/abs/2512.09742},
  urldate = {2025-12-13},
  year = {2025}
}

@misc{braunExplorationHackingCan2025,
  author = {Braun, Joschka and Jang, Eyon and Falck, Damon and Stastny, Julian},
  month = {July},
  shorttitle = {Exploration hacking},
  title = {Exploration hacking: can reasoning models subvert {RL}?},
  url = {https://www.lesswrong.com/posts/Dft9vpMnEeWFE3Gc6/exploration-hacking-can-reasoning-models-subvert-rl-1},
  urldate = {2026-01-07},
  year = {2025}
}

@misc{brownUnderstandingGoalGeneralisation2026,
  author = {Brown, Jason Ross and Young, Edward James},
  doi = {10.48550/arXiv.2605.23565},
  month = {May},
  note = {arXiv:2605.23565 [cs.LG]},
  publisher = {arXiv},
  title = {Understanding {Goal} {Generalisation} in {Sequential} {Reinforcement} {Learning}},
  url = {http://arxiv.org/abs/2605.23565},
  urldate = {2026-06-12},
  year = {2026}
}

@misc{buckCaseEnsuringThat2024,
  author = {Buck Shlegeris and Greenblatt, Ryan},
  month = {January},
  title = {The case for ensuring that powerful {AIs} are controlled},
  url = {https://www.alignmentforum.org/posts/kcKrE9mzEHrdqtDpE/the-case-for-ensuring-that-powerful-ais-are-controlled},
  urldate = {2026-01-07},
  year = {2024}
}

@misc{buckMisalignmentStrategicUnderperformance2025,
  author = {Buck Shlegeris and Stastny, Julian},
  month = {May},
  shorttitle = {Misalignment and {Strategic} {Underperformance}},
  title = {Misalignment and {Strategic} {Underperformance}: {An} {Analysis} of {Sandbagging} and {Exploration} {Hacking}},
  url = {https://www.lesswrong.com/posts/TeTegzR8X5CuKgMc3/misalignment-and-strategic-underperformance-an-analysis-of},
  urldate = {2026-01-07},
  year = {2025}
}

@misc{camAlignmentPretrainingAI2025,
  author = {Tice, Cam and Radmard, Puria and O’Brien, Kyle and Africa, David and Ratnam, Samuel and Kim, Andy},
  month = {December},
  shorttitle = {Alignment {Pretraining}},
  title = {Alignment {Pretraining}: {AI} {Discourse} {Causes} {Self}-{Fulfilling} ({Mis})alignment},
  url = {https://www.alignmentforum.org/posts/TcfyGD2aKdZ7Rt3hk/alignment-pretraining-ai-discourse-causes-self-fulfilling},
  urldate = {2026-01-07},
  year = {2025}
}

@misc{carlsmithSchemingAIsWill2023,
  author = {Carlsmith, Joe},
  doi = {10.48550/arXiv.2311.08379},
  month = {November},
  note = {arXiv:2311.08379 [cs]},
  publisher = {arXiv},
  shorttitle = {Scheming {AIs}},
  title = {Scheming {AIs}: {Will} {AIs} fake alignment during training in order to get power?},
  url = {http://arxiv.org/abs/2311.08379},
  urldate = {2026-01-09},
  year = {2023}
}

@misc{clarkThinkYouHave2018,
  author = {Clark, Peter and Cowhey, Isaac and Etzioni, Oren and Khot, Tushar and Sabharwal, Ashish and Schoenick, Carissa and Tafjord, Oyvind},
  doi = {10.48550/arXiv.1803.05457},
  month = {March},
  note = {arXiv:1803.05457 [cs]},
  publisher = {arXiv},
  shorttitle = {Think you have {Solved} {Question} {Answering}?},
  title = {Think you have {Solved} {Question} {Answering}? {Try} {ARC}, the {AI2} {Reasoning} {Challenge}},
  url = {http://arxiv.org/abs/1803.05457},
  urldate = {2026-01-28},
  year = {2018}
}

@misc{evhubAlignmentRemainsHard2025,
  author = {Hubinger, Evan},
  month = {November},
  title = {Alignment remains a hard, unsolved problem},
  url = {https://www.lesswrong.com/posts/epjuxGnSPof3GnMSL/alignment-remains-a-hard-unsolved-problem},
  urldate = {2026-01-27},
  year = {2025}
}

@misc{evhubHowWeBecome2021,
  author = {Hubinger, Evan},
  month = {November},
  title = {How do we become confident in the safety of a machine learning system?},
  url = {https://www.lesswrong.com/posts/FDJnZt8Ks2djouQTZ/how-do-we-become-confident-in-the-safety-of-a-machine},
  urldate = {2026-01-07},
  year = {2021}
}

@misc{evhubModelOrganismsMisalignment2023,
  author = {Hubinger, Evan and Schiefer, Nicholas and Denison, Carson and Perez, Ethan},
  month = {August},
  shorttitle = {Model {Organisms} of {Misalignment}},
  title = {Model {Organisms} of {Misalignment}: {The} {Case} for a {New} {Pillar} of {Alignment} {Research}},
  url = {https://www.alignmentforum.org/posts/ChDH335ckdvpxXaXX/model-organisms-of-misalignment-the-case-for-a-new-pillar-of-1},
  urldate = {2026-01-07},
  year = {2023}
}

@article{geirhosShortcutLearningDeep2020,
  author = {Geirhos, Robert and Jacobsen, Jörn-Henrik and Michaelis, Claudio and Zemel, Richard and Brendel, Wieland and Bethge, Matthias and Wichmann, Felix A.},
  doi = {10.1038/s42256-020-00257-z},
  issn = {2522-5839},
  journal = {Nature Machine Intelligence},
  month = {November},
  note = {arXiv:2004.07780 [cs]},
  number = {11},
  pages = {665--673},
  title = {Shortcut {Learning} in {Deep} {Neural} {Networks}},
  url = {http://arxiv.org/abs/2004.07780},
  urldate = {2026-01-18},
  volume = {2},
  year = {2020}
}

@misc{greenblattAIControlImproving2024,
  author = {Greenblatt, Ryan and Shlegeris, Buck and Sachan, Kshitij and Roger, Fabien},
  doi = {10.48550/arXiv.2312.06942},
  month = {July},
  note = {arXiv:2312.06942 [cs]},
  publisher = {arXiv},
  shorttitle = {{AI} {Control}},
  title = {{AI} {Control}: {Improving} {Safety} {Despite} {Intentional} {Subversion}},
  url = {http://arxiv.org/abs/2312.06942},
  urldate = {2026-01-07},
  year = {2024}
}

@misc{greenblattAlignmentFakingLarge2024,
  author = {Greenblatt, Ryan and Denison, Carson and Wright, Benjamin and Roger, Fabien and MacDiarmid, Monte and Marks, Sam and Treutlein, Johannes and Belonax, Tim and Chen, Jack and Duvenaud, David and Khan, Akbir and Michael, Julian and Mindermann, Sören and Perez, Ethan and Petrini, Linda and Uesato, Jonathan and Kaplan, Jared and Shlegeris, Buck and Bowman, Samuel R. and Hubinger, Evan},
  doi = {10.48550/arXiv.2412.14093},
  month = {December},
  note = {arXiv:2412.14093 [cs]},
  publisher = {arXiv},
  title = {Alignment faking in large language models},
  url = {http://arxiv.org/abs/2412.14093},
  urldate = {2026-01-27},
  year = {2024}
}

@misc{greenblattStressTestingCapabilityElicitation2024,
  author = {Greenblatt, Ryan and Roger, Fabien and Krasheninnikov, Dmitrii and Krueger, David},
  doi = {10.48550/arXiv.2405.19550},
  month = {May},
  note = {arXiv:2405.19550 [cs]},
  publisher = {arXiv},
  title = {Stress-{Testing} {Capability} {Elicitation} {With} {Password}-{Locked} {Models}},
  url = {http://arxiv.org/abs/2405.19550},
  urldate = {2025-11-06},
  year = {2024}
}

@misc{harperInformationGeometryEvolutionary2009,
  author = {Harper, Marc},
  doi = {10.48550/arXiv.0911.1383},
  month = {November},
  note = {arXiv:0911.1383 [cs]},
  publisher = {arXiv},
  title = {Information {Geometry} and {Evolutionary} {Game} {Theory}},
  url = {http://arxiv.org/abs/0911.1383},
  urldate = {2026-03-06},
  year = {2009}
}

@misc{hebbarHowCanWe2025,
  author = {Hebbar, Vivek},
  month = {April},
  title = {How can we solve diffuse threats like research sabotage with {AI} control?},
  url = {https://www.alignmentforum.org/posts/Mf5Hnpi2KcqZdmFDq/how-can-we-solve-diffuse-threats-like-research-sabotage-with},
  urldate = {2025-11-06},
  year = {2025}
}

@book{hofbauerEvolutionaryGamesPopulation1998,
  address = {Cambridge},
  author = {Hofbauer, Josef and Sigmund, Karl},
  doi = {10.1017/CBO9781139173179},
  isbn = {978-0-521-62570-8},
  publisher = {Cambridge University Press},
  title = {Evolutionary {Games} and {Population} {Dynamics}},
  url = {https://www.cambridge.org/core/books/evolutionary-games-and-population-dynamics/A8D94EBE6A16837E7CB3CED24E1948F8},
  urldate = {2026-03-06},
  year = {1998}
}

@misc{hubingerRisksLearnedOptimization2021,
  author = {Hubinger, Evan and Merwijk, Chris van and Mikulik, Vladimir and Skalse, Joar and Garrabrant, Scott},
  doi = {10.48550/arXiv.1906.01820},
  month = {December},
  note = {arXiv:1906.01820 [cs]},
  publisher = {arXiv},
  title = {Risks from {Learned} {Optimization} in {Advanced} {Machine} {Learning} {Systems}},
  url = {http://arxiv.org/abs/1906.01820},
  urldate = {2025-11-12},
  year = {2021}
}

@misc{hubingerSleeperAgentsTraining2024,
  author = {Hubinger, Evan and Denison, Carson and Mu, Jesse and Lambert, Mike and Tong, Meg and MacDiarmid, Monte and Lanham, Tamera and Ziegler, Daniel M. and Maxwell, Tim and Cheng, Newton and Jermyn, Adam and Askell, Amanda and Radhakrishnan, Ansh and Anil, Cem and Duvenaud, David and Ganguli, Deep and Barez, Fazl and Clark, Jack and Ndousse, Kamal and Sachan, Kshitij and Sellitto, Michael and Sharma, Mrinank and DasSarma, Nova and Grosse, Roger and Kravec, Shauna and Bai, Yuntao and Witten, Zachary and Favaro, Marina and Brauner, Jan and Karnofsky, Holden and Christiano, Paul and Bowman, Samuel R. and Graham, Logan and Kaplan, Jared and Mindermann, Sören and Greenblatt, Ryan and Shlegeris, Buck and Schiefer, Nicholas and Perez, Ethan},
  doi = {10.48550/arXiv.2401.05566},
  month = {January},
  note = {arXiv:2401.05566 [cs]},
  publisher = {arXiv},
  shorttitle = {Sleeper {Agents}},
  title = {Sleeper {Agents}: {Training} {Deceptive} {LLMs} that {Persist} {Through} {Safety} {Training}},
  url = {http://arxiv.org/abs/2401.05566},
  urldate = {2026-01-07},
  year = {2024}
}

@misc{janusMysteriesModeCollapse2022,
  author = {Janus},
  month = {November},
  title = {Mysteries of mode collapse},
  url = {https://www.lesswrong.com/posts/t9svvNPNmFf5Qa3TA/mysteries-of-mode-collapse},
  urldate = {2026-01-08},
  year = {2022}
}

@misc{janusSimulators2022,
  author = {Janus},
  month = {September},
  title = {Simulators},
  url = {https://www.alignmentforum.org/posts/vJFdjigzmcXMhNTsx/simulators},
  urldate = {2026-01-07},
  year = {2022}
}

@misc{kirkUnderstandingEffectsRLHF2024,
  author = {Kirk, Robert and Mediratta, Ishita and Nalmpantis, Christoforos and Luketina, Jelena and Hambro, Eric and Grefenstette, Edward and Raileanu, Roberta},
  doi = {10.48550/arXiv.2310.06452},
  month = {February},
  note = {arXiv:2310.06452 [cs]},
  publisher = {arXiv},
  title = {Understanding the {Effects} of {RLHF} on {LLM} {Generalisation} and {Diversity}},
  url = {http://arxiv.org/abs/2310.06452},
  urldate = {2026-01-09},
  year = {2024}
}

@misc{kokotajloAI2027What2025,
  author = {Kokotajlo, Daniel and Larsen, Thomas and Lifland, Eli and Alexander, Scott and V, Jonas and Dean, Romeo},
  month = {April},
  shorttitle = {{AI} 2027},
  title = {{AI} 2027: {What} {Superintelligence} {Looks} {Like}},
  url = {https://www.alignmentforum.org/posts/TpSFoqoG2M5MAAesg/ai-2027-what-superintelligence-looks-like-1},
  urldate = {2026-01-09},
  year = {2025}
}

@article{kuenssbergMothersSayAI2025,
  author = {Kuenssberg, Laura},
  journal = {BBC News},
  language = {en-GB},
  month = {November},
  title = {Mothers say {AI} chatbots encouraged their sons to kill themselves},
  url = {https://www.bbc.com/news/articles/ce3xgwyywe4o},
  urldate = {2026-01-27},
  year = {2025}
}

@misc{kumarFineTuningCanDistort2022,
  author = {Kumar, Ananya and Raghunathan, Aditi and Jones, Robbie and Ma, Tengyu and Liang, Percy},
  doi = {10.48550/arXiv.2202.10054},
  month = {February},
  note = {arXiv:2202.10054 [cs]},
  publisher = {arXiv},
  title = {Fine-{Tuning} can {Distort} {Pretrained} {Features} and {Underperform} {Out}-of-{Distribution}},
  url = {http://arxiv.org/abs/2202.10054},
  urldate = {2026-01-18},
  year = {2022}
}

@misc{langoscoGoalMisgeneralizationDeep2023,
  author = {Langosco, Lauro and Koch, Jack and Sharkey, Lee and Pfau, Jacob and Orseau, Laurent and Krueger, David},
  doi = {10.48550/arXiv.2105.14111},
  month = {January},
  note = {arXiv:2105.14111 [cs]},
  publisher = {arXiv},
  title = {Goal {Misgeneralization} in {Deep} {Reinforcement} {Learning}},
  url = {http://arxiv.org/abs/2105.14111},
  urldate = {2025-11-18},
  year = {2023}
}

@misc{lopezRiseParasiticAI2025,
  author = {Lopez, Adele},
  month = {September},
  title = {The {Rise} of {Parasitic} {AI}},
  url = {https://www.lesswrong.com/posts/6ZnznCaTcbGYsCmqu/the-rise-of-parasitic-ai},
  urldate = {2026-01-07},
  year = {2025}
}

@misc{lynchAgenticMisalignmentHow2025,
  author = {Lynch, Aengus and Wright, Benjamin and Larson, Caleb and Ritchie, Stuart J. and Mindermann, Soren and Hubinger, Evan and Perez, Ethan and Troy, Kevin},
  doi = {10.48550/arXiv.2510.05179},
  month = {October},
  note = {arXiv:2510.05179 [cs]},
  publisher = {arXiv},
  shorttitle = {Agentic {Misalignment}},
  title = {Agentic {Misalignment}: {How} {LLMs} {Could} {Be} {Insider} {Threats}},
  url = {http://arxiv.org/abs/2510.05179},
  urldate = {2026-01-27},
  year = {2025}
}

@misc{mallenBehavioralSelectionModel2025,
  author = {Mallen, Alex and Buck Shlegeris},
  month = {December},
  title = {The behavioral selection model for predicting {AI} motivations},
  url = {https://www.alignmentforum.org/posts/FeaJcWkC6fuRAMsfp/the-behavioral-selection-model-for-predicting-ai-motivations-1},
  urldate = {2026-01-07},
  year = {2025}
}

@misc{marksAuditingLanguageModels2025,
  author = {Marks, Samuel and Treutlein, Johannes and Bricken, Trenton and Lindsey, Jack and Marcus, Jonathan and Mishra-Sharma, Siddharth and Ziegler, Daniel and Ameisen, Emmanuel and Batson, Joshua and Belonax, Tim and Bowman, Samuel R. and Carter, Shan and Chen, Brian and Cunningham, Hoagy and Denison, Carson and Dietz, Florian and Golechha, Satvik and Khan, Akbir and Kirchner, Jan and Leike, Jan and Meek, Austin and Nishimura-Gasparian, Kei and Ong, Euan and Olah, Christopher and Pearce, Adam and Roger, Fabien and Salle, Jeanne and Shih, Andy and Tong, Meg and Thomas, Drake and Rivoire, Kelley and Jermyn, Adam and MacDiarmid, Monte and Henighan, Tom and Hubinger, Evan},
  doi = {10.48550/arXiv.2503.10965},
  month = {March},
  note = {arXiv:2503.10965 [cs]},
  publisher = {arXiv},
  title = {Auditing language models for hidden objectives},
  url = {http://arxiv.org/abs/2503.10965},
  urldate = {2025-11-06},
  year = {2025}
}

@misc{marksInoculationPromptingInstructing2025,
  author = {Marks, Sam and Wichers, Nevan and Tan, Daniel and Ebtekar, Aram and Jose, Arun and Africa, David and Mallen, Alex and Roger, Fabien},
  month = {October},
  shorttitle = {Inoculation prompting},
  title = {Inoculation prompting: {Instructing} models to misbehave at train-time can improve run-time behavior},
  url = {https://www.alignmentforum.org/posts/AXRHzCPMv6ywCxCFp/inoculation-prompting-instructing-models-to-misbehave-at},
  urldate = {2026-01-07},
  year = {2025}
}

@inproceedings{meiEscapingGravitationalPull2020,
  author = {Mei, Jincheng and Xiao, Chenjun and Dai, Bo and Li, Lihong and Szepesvari, Csaba and Schuurmans, Dale},
  booktitle = {Advances in {Neural} {Information} {Processing} {Systems}},
  pages = {21130--21140},
  publisher = {Curran Associates, Inc.},
  title = {Escaping the {Gravitational} {Pull} of {Softmax}},
  url = {https://proceedings.neurips.cc/paper/2020/hash/f1cf2a082126bf02de0b307778ce73a7-Abstract.html},
  urldate = {2026-02-07},
  volume = {33},
  year = {2020}
}

@inproceedings{meiGlobalConvergenceRates2020,
  author = {Mei, Jincheng and Xiao, Chenjun and Szepesvari, Csaba and Schuurmans, Dale},
  booktitle = {Proceedings of the 37th {International} {Conference} on {Machine} {Learning}},
  issn = {2640-3498},
  language = {en},
  month = {November},
  pages = {6820--6829},
  publisher = {PMLR},
  title = {On the {Global} {Convergence} {Rates} of {Softmax} {Policy} {Gradient} {Methods}},
  url = {https://proceedings.mlr.press/v119/mei20b.html},
  urldate = {2026-02-07},
  year = {2020}
}

@misc{meiUnderstandingEffectStochasticity2021,
  author = {Mei, Jincheng and Dai, Bo and Xiao, Chenjun and Szepesvari, Csaba and {Dale Schuurmans}},
  doi = {10.48550/arXiv.2110.15572},
  month = {October},
  note = {arXiv:2110.15572 [cs]},
  publisher = {arXiv},
  title = {Understanding the {Effect} of {Stochasticity} in {Policy} {Optimization}},
  url = {http://arxiv.org/abs/2110.15572},
  urldate = {2026-02-07},
  year = {2021}
}

@misc{ngoAlignmentProblemDeep2025,
  author = {Ngo, Richard and Chan, Lawrence and Mindermann, Sören},
  doi = {10.48550/arXiv.2209.00626},
  month = {May},
  note = {arXiv:2209.00626 [cs]},
  publisher = {arXiv},
  title = {The {Alignment} {Problem} from a {Deep} {Learning} {Perspective}},
  url = {http://arxiv.org/abs/2209.00626},
  urldate = {2025-11-12},
  year = {2025}
}

@misc{popeShardTheoryLessWrong2022,
  author = {Pope, Quintin and Turner, Alex and Foster, Charles and Smith, Logan},
  title = {Shard {Theory} — {LessWrong}},
  url = {https://www.lesswrong.com/s/nyEFg3AuJpdAozmoX},
  urldate = {2026-01-07},
  year = {2022}
}

@misc{ryan_greenblattSonnet45sEval2025,
  author = {Greenblatt, Ryan and Pan, Alexa},
  month = {October},
  title = {Sonnet 4.5's eval gaming seriously undermines alignment evals, and this seems caused by training on alignment evals},
  url = {https://www.lesswrong.com/posts/qgehQxiTXj53X49mM/sonnet-4-5-s-eval-gaming-seriously-undermines-alignment},
  urldate = {2025-11-12},
  year = {2025}
}

@misc{rydSupervisedFinetuningMethod2025,
  author = {Ryd, Emil and Benton, Joe and Hebbar, Vivek},
  month = {November},
  title = {Supervised fine-tuning as a method for training-based {AI} control},
  url = {https://www.alignmentforum.org/posts/Cz7AKenSiNkgijdJi/supervised-fine-tuning-as-a-method-for-training-based-ai},
  urldate = {2026-01-07},
  year = {2025}
}

@misc{schoen2026metagaming,
  author = {Schoen, Bronson and Nitishinskaya, Jenny},
  month = {March},
  note = {tex.howpublished: OpenAI Alignment Research Blog},
  title = {Metagaming matters for training, evaluation, and oversight},
  url = {https://alignment.openai.com/metagaming/},
  year = {2026}
}

@book{shahshahaniNewMathematicalFramework1979,
  address = {Providence, R.I},
  author = {Shahshahani, S.},
  collaborator = {{American Mathematical Society}},
  isbn = {978-0-8128-2211-3},
  language = {eng},
  publisher = {American Mathematical Society},
  series = {Memoirs of the {American} {Mathematical} {Society}, no. 211},
  title = {A new mathematical framework for the study of linkage and selection},
  year = {1979}
}

@misc{TeachingClaudeWhy,
  author = {Kutasov, Jonathan and Jermyn, Adam and Hubinger, Evan and Price, Sara},
  month = {May},
  title = {Teaching {Claude} {Why}},
  url = {https://alignment.anthropic.com/2026/teaching-claude-why/},
  urldate = {2026-06-12},
  year = {2026}
}

@misc{teamGemma3Technical2025,
  author = {Team, Gemma and Kamath, Aishwarya and Ferret, Johan and Pathak, Shreya and Vieillard, Nino and Merhej, Ramona and Perrin, Sarah and Matejovicova, Tatiana and Ramé, Alexandre and Rivière, Morgane and Rouillard, Louis and Mesnard, Thomas and Cideron, Geoffrey and Grill, Jean-bastien and Ramos, Sabela and Yvinec, Edouard and Casbon, Michelle and Pot, Etienne and Penchev, Ivo and Liu, Gaël and Visin, Francesco and Kenealy, Kathleen and Beyer, Lucas and Zhai, Xiaohai and Tsitsulin, Anton and Busa-Fekete, Robert and Feng, Alex and Sachdeva, Noveen and Coleman, Benjamin and Gao, Yi and Mustafa, Basil and Barr, Iain and Parisotto, Emilio and Tian, David and Eyal, Matan and Cherry, Colin and Peter, Jan-Thorsten and Sinopalnikov, Danila and Bhupatiraju, Surya and Agarwal, Rishabh and Kazemi, Mehran and Malkin, Dan and Kumar, Ravin and Vilar, David and Brusilovsky, Idan and Luo, Jiaming and Steiner, Andreas and Friesen, Abe and Sharma, Abhanshu and Sharma, Abheesht and Gilady, Adi Mayrav and Goedeckemeyer, Adrian and Saade, Alaa and Feng, Alex and Kolesnikov, Alexander and Bendebury, Alexei and Abdagic, Alvin and Vadi, Amit and György, András and Pinto, André Susano and Das, Anil and Bapna, Ankur and Miech, Antoine and Yang, Antoine and Paterson, Antonia and Shenoy, Ashish and Chakrabarti, Ayan and Piot, Bilal and Wu, Bo and Shahriari, Bobak and Petrini, Bryce and Chen, Charlie and Lan, Charline Le and Choquette-Choo, Christopher A. and Carey, C. J. and Brick, Cormac and Deutsch, Daniel and Eisenbud, Danielle and Cattle, Dee and Cheng, Derek and Paparas, Dimitris and Sreepathihalli, Divyashree Shivakumar and Reid, Doug and Tran, Dustin and Zelle, Dustin and Noland, Eric and Huizenga, Erwin and Kharitonov, Eugene and Liu, Frederick and Amirkhanyan, Gagik and Cameron, Glenn and Hashemi, Hadi and Klimczak-Plucińska, Hanna and Singh, Harman and Mehta, Harsh and Lehri, Harshal Tushar and Hazimeh, Hussein and Ballantyne, Ian and Szpektor, Idan and Nardini, Ivan and Pouget-Abadie, Jean and Chan, Jetha and Stanton, Joe and Wieting, John and Lai, Jonathan and Orbay, Jordi and Fernandez, Joseph and Newlan, Josh and Ji, Ju-yeong and Singh, Jyotinder and Black, Kat and Yu, Kathy and Hui, Kevin and Vodrahalli, Kiran and Greff, Klaus and Qiu, Linhai and Valentine, Marcella and Coelho, Marina and Ritter, Marvin and Hoffman, Matt and Watson, Matthew and Chaturvedi, Mayank and Moynihan, Michael and Ma, Min and Babar, Nabila and Noy, Natasha and Byrd, Nathan and Roy, Nick and Momchev, Nikola and Chauhan, Nilay and Sachdeva, Noveen and Bunyan, Oskar and Botarda, Pankil and Caron, Paul and Rubenstein, Paul Kishan and Culliton, Phil and Schmid, Philipp and Sessa, Pier Giuseppe and Xu, Pingmei and Stanczyk, Piotr and Tafti, Pouya and Shivanna, Rakesh and Wu, Renjie and Pan, Renke and Rokni, Reza and Willoughby, Rob and Vallu, Rohith and Mullins, Ryan and Jerome, Sammy and Smoot, Sara and Girgin, Sertan and Iqbal, Shariq and Reddy, Shashir and Sheth, Shruti and Põder, Siim and Bhatnagar, Sijal and Panyam, Sindhu Raghuram and Eiger, Sivan and Zhang, Susan and Liu, Tianqi and Yacovone, Trevor and Liechty, Tyler and Kalra, Uday and Evci, Utku and Misra, Vedant and Roseberry, Vincent and Feinberg, Vlad and Kolesnikov, Vlad and Han, Woohyun and Kwon, Woosuk and Chen, Xi and Chow, Yinlam and Zhu, Yuvein and Wei, Zichuan and Egyed, Zoltan and Cotruta, Victor and Giang, Minh and Kirk, Phoebe and Rao, Anand and Black, Kat and Babar, Nabila and Lo, Jessica and Moreira, Erica and Martins, Luiz Gustavo and Sanseviero, Omar and Gonzalez, Lucas and Gleicher, Zach and Warkentin, Tris and Mirrokni, Vahab and Senter, Evan and Collins, Eli and Barral, Joelle and Ghahramani, Zoubin and Hadsell, Raia and Matias, Yossi and Sculley, D. and Petrov, Slav and Fiedel, Noah and Shazeer, Noam and Vinyals, Oriol and Dean, Jeff and Hassabis, Demis and Kavukcuoglu, Koray and Farabet, Clement and Buchatskaya, Elena and Alayrac, Jean-Baptiste and Anil, Rohan and Dmitry and Lepikhin and Borgeaud, Sebastian and Bachem, Olivier and Joulin, Armand and Andreev, Alek and Hardin, Cassidy and Dadashi, Robert and Hussenot, Léonard},
  doi = {10.48550/arXiv.2503.19786},
  month = {March},
  note = {arXiv:2503.19786 [cs.CL]},
  publisher = {arXiv},
  title = {Gemma 3 {Technical} {Report}},
  url = {http://arxiv.org/abs/2503.19786},
  urldate = {2026-06-12},
  year = {2025}
}

@misc{tuEmpiricalStudyRobustness2020,
  author = {Tu, Lifu and Lalwani, Garima and Gella, Spandana and He, He},
  doi = {10.48550/arXiv.2007.06778},
  month = {August},
  note = {arXiv:2007.06778 [cs.CL]},
  publisher = {arXiv},
  title = {An {Empirical} {Study} on {Robustness} to {Spurious} {Correlations} using {Pre}-trained {Language} {Models}},
  url = {http://arxiv.org/abs/2007.06778},
  urldate = {2026-06-12},
  year = {2020}
}

@misc{wangModifyingLLMBeliefs2025,
  author = {Wang, Rowan and Griffin, Avery and Treutlein, Johannes and Perez, Ethan and Michael, Julian and Roger, Fabien and Marks, Samuel},
  month = {April},
  title = {Modifying {LLM} {Beliefs} with {Synthetic} {Document} {Finetuning}},
  url = {https://alignment.anthropic.com/2025/modifying-beliefs-via-sdf/},
  urldate = {2025-11-06},
  year = {2025}
}

@misc{wolfFundamentalLimitationsAlignment2024,
  author = {Wolf, Yotam and Wies, Noam and Avnery, Oshri and Levine, Yoav and Shashua, Amnon},
  doi = {10.48550/arXiv.2304.11082},
  month = {June},
  note = {arXiv:2304.11082 [cs]},
  publisher = {arXiv},
  title = {Fundamental {Limitations} of {Alignment} in {Large} {Language} {Models}},
  url = {http://arxiv.org/abs/2304.11082},
  urldate = {2026-01-08},
  year = {2024}
}

@misc{yueDoesReinforcementLearning2025,
  author = {Yue, Yang and Chen, Zhiqi and Lu, Rui and Zhao, Andrew and Wang, Zhaokai and Yue, Yang and Song, Shiji and Huang, Gao},
  doi = {10.48550/arXiv.2504.13837},
  month = {November},
  note = {arXiv:2504.13837 [cs.AI]},
  publisher = {arXiv},
  title = {Does {Reinforcement} {Learning} {Really} {Incentivize} {Reasoning} {Capacity} in {LLMs} {Beyond} the {Base} {Model}?},
  url = {http://arxiv.org/abs/2504.13837},
  urldate = {2026-06-12},
  year = {2025}
}

@misc{zhangVerbalizedSamplingHow2025,
  author = {Zhang, Jiayi and Yu, Simon and Chong, Derek and Sicilia, Anthony and Tomz, Michael R. and Manning, Christopher D. and Shi, Weiyan},
  doi = {10.48550/arXiv.2510.01171},
  month = {October},
  note = {arXiv:2510.01171 [cs]
version: 1},
  publisher = {arXiv},
  shorttitle = {Verbalized {Sampling}},
  title = {Verbalized {Sampling}: {How} to {Mitigate} {Mode} {Collapse} and {Unlock} {LLM} {Diversity}},
  url = {http://arxiv.org/abs/2510.01171},
  urldate = {2026-01-09},
  year = {2025}
}

%%%%%%%%%%%%%%%%%%%%%%%%%%%%%%%%%%%%%%%%%%%%%%%%%%%%%%%%%%%%%%%%%%%%%%%%%%%%%%%
\appendix
\crefalias{section}{appendix}
\crefalias{subsection}{appendix}

\section{Gradient Dynamics for Mixture Policies}
\label{app:maths-mixture-RL-dynamics}

% We formalise the intuition that RL on a mixture of policies induces concentration onto maximally-rewarded components with amplification of initial asymmetries among equally-rewarded components.

In this subsection we give a self-contained proof of \Cref{prop:collapse}, using explicit analysis of the underlying ODE system.

After writing this proof, it came to our attention that policy-gradient dynamics under softmax parametrisation have already been studied in the ML literature \citep{agarwalTheoryPolicyGradient2020,meiGlobalConvergenceRates2020,meiEscapingGravitationalPull2020,meiUnderstandingEffectStochasticity2021}. Indeed, the `Concentration' part of our results can be seen as an immediate corollary of the analysis of \citet{meiGlobalConvergenceRates2020}. Their analysis is more general, using  Łojasiewicz inequalities, and also applies to the discrete step-size setting. That said, the primary focus of the literature is on the convergence of the objective $\cJ$ rather than the relative expression of the different optimal components; indeed, to our knowledge, the `Amplification' result is novel. Our proof itself uses rather different techniques --- in particular via transforming the ODE system to consider variables $v_i = \exp(-\eta_i)$ with more amenable gradient dynamics --- which gives complementary intuition to the more `global' techniques from the existing literature.

\subsection{Setup}

First let's recap the setup from \Cref{sec:core-idea-reweighting}.
Consider a (toy) model that is a mixture of $N$ fixed conditional policies $(\pi_i)_{i=1}^N$ with weights determined via softmax of learnable logits $\eta = (\eta_1, \ldots, \eta_N) \in \mathbb{R}^N$:
\begin{equation}
    \pi_{\eta}(y \mid x) = \sum_{i=1}^N w_i\, \pi_i(y \mid x),
    \qquad
    w_i = \frac{e^{\eta_i}}{\sum_{j=1}^N e^{\eta_j}}.
\end{equation}
Let $r_i = \mathbb{E}_{x\sim \Dtrain, y \sim \pi_i(\cdot | x)}[R(x,y)]$ denote the expected reward under component~$i$.
The policy gradient objective is
\begin{equation}
    \cJ(\eta) = \mathbb{E}_{x\sim \Dtrain, y \sim \pi_\eta(\cdot | x)}[R(x,y)] = \sum_{i=1}^N w_i\, r_i.
\end{equation}
Define $r^* \defeq \max_i r_i$ and $S^* \defeq \{i : r_i = r^*\}$.

\gradientDynamicsProposition*

\subsection{Preliminary Observations}

\begin{lemma}[$\eta$-space dynamics]\label{lem:eta-dynamics}
Under gradient ascent $\dot{\eta} = \nabla_\eta \cJ$,
\begin{equation}
    \dot{\eta}_i = w_i(r_i - \Rbar),
    \qquad
    \Rbar \defeq \sum_{k=1}^N w_k\, r_k.
\end{equation}
\end{lemma}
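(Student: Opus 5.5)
The plan is to differentiate $\cJ(\eta)=\sum_k w_k r_k$ directly, using the standard softmax Jacobian. Since $w_k = e^{\eta_k}/\sum_j e^{\eta_j}$, one has
\[
\frac{\partial w_k}{\partial \eta_i} = w_k(\delta_{ki} - w_i),
\]
which I would verify by the quotient rule. Treat the two cases separately. For $k=i$ the derivative is $w_i - w_i^2$; for $k\neq i$ it is $-w_k w_i$.

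Substituting into the chain rule, and using that the component rewards $r_k$ do not depend on $\eta$ (the $\pi_k$ are fixed), gives
\[
\frac{\partial \cJ}{\partial \eta_i} = \sum_k r_k\, w_k(\delta_{ki}-w_i) = w_i r_i - w_i \sum_k w_k r_k = w_i(r_i - \Rbar).
\]
Setting $\dot\eta_i$ equal to this expression yields the claimed dynamics.

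One point needs a remark. The identity $\cJ = \sum_i w_i r_i$ was already established in the setup by linearity of expectation over the mixture, so no interchange of differentiation and expectation over $y$ is needed: $\cJ$ is an explicit finite sum of smooth functions of $\eta$.

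There is no real obstacle here. The only step requiring care is the bookkeeping of the diagonal versus off-diagonal terms of the softmax Jacobian. I would also note that $\sum_i \dot\eta_i = \Rbar - \Rbar = 0$, since $\sum_i w_i = 1$. This is a useful sanity check, and it reflects the shift invariance of the softmax.
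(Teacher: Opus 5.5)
Your proposal is correct and follows essentially the same route as the paper: apply the chain rule to $\cJ=\sum_k w_k r_k$ with the softmax Jacobian $\partial w_k/\partial\eta_i = w_k(\delta_{ki}-w_i)$ to get $w_i(r_i-\Rbar)$. Your extra remarks, that no derivative-expectation interchange is needed and that $\sum_i\dot\eta_i=0$, are accurate additions that the paper does not spell out.
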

\begin{proof}
By the chain rule and the standard softmax Jacobian $\partial w_j / \partial \eta_i = w_j(\delta_{ji} - w_i)$,
\[
\frac{\partial \cJ}{\partial \eta_i}
= \sum_j r_j \cdot w_j(\delta_{ji} - w_i)
= r_i w_i - w_i \sum_j r_j w_j
= w_i(r_i - \Rbar). \qedhere
\]
\end{proof}

Firstly, note that this closely resembles, but does not quite recover the \textit{replicator equation} from Evolutionary Game Theory \citep{hofbauerEvolutionaryGamesPopulation1998}. However, one can recover the replicator equation if instead of gradient ascent one performs \textit{natural} gradient ascent; we discuss this relationship in more detail in \Cref{rem:natural-gradient-and-replicator}.

Though the expression also appears simple, it is somewhat inconvenient to analyze. Indeed, writing $Z \defeq \sum_{k=1}^N e^{\eta_k}$ for the normalization factor (partition function) this expression becomes $\dot{\eta}_i = \frac{\exp(\eta_i)}{Z}(r_i - \Rbar)$, which contains both a factor depending on $\eta_i$ and normalization terms through $Z$ and $\Rbar$.

It turns out to be more convenient to analyze the dynamics of the transformed variables $v_i \defeq e^{-\eta_i}$, which we will refer to as \emph{reciprocal coordinates}.
Remarkably, the gradient $\dot{v}_i$  can be written as a function of $Z, \Rbar,$ and $r_i$ alone, with no direct dependence on $v_i$.
We present these transformed dynamics in the following lemma.

\begin{lemma}[Reciprocal dynamics]\label{lem:reciprocal}
\leavevmode

\begin{enumerate} 
    \item[\textup{(i)}] \textbf{Dynamics.}\; $\dot{v}_i = -Z^{-1}(r_i - \Rbar)$.
    \item[\textup{(ii)}] \textbf{Differences.}\; For any indices $i, j$ and $t \geq 0$:
    \begin{equation}\label{eq:diff}
        v_j(t) - v_i(t) = \bigl(v_j(0) - v_i(0)\bigr) - (r_j - r_i)\, I(t).
    \end{equation}
    where $I(t) \defeq \int_0^t Z(\tau)^{-1}\, d\tau$.
\end{enumerate}
In particular, if $r_i = r_j$ then $v_j(t) - v_i(t)$ is constant.
\end{lemma}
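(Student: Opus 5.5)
The plan is a direct computation from \Cref{lem:eta-dynamics} followed by an integration. For part (i), I would differentiate $v_i = e^{-\eta_i}$ along the flow using the chain rule, which gives $\dot v_i = -e^{-\eta_i}\dot\eta_i$. Substituting $\dot\eta_i = w_i(r_i - \Rbar)$ from \Cref{lem:eta-dynamics}, and writing $w_i = e^{\eta_i}/Z$ with $Z = \sum_k e^{\eta_k}$, the factor $e^{-\eta_i}$ cancels exactly against $e^{\eta_i}$. This leaves $\dot v_i = -Z^{-1}(r_i - \Rbar)$. This cancellation is the whole point of the reciprocal coordinates: the dependence on $\eta_i$ itself disappears, and only $Z$, $\Rbar$ and $r_i$ remain.

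For part (ii), I would subtract the identity in (i) for index $i$ from the one for index $j$. The common term $Z^{-1}\Rbar$ cancels, which gives
\[
\frac{d}{dt}\bigl(v_j - v_i\bigr) = -Z(t)^{-1}(r_j - r_i).
\]
Since the rewards $r_i, r_j$ are constants (the component policies are fixed), I can integrate from $0$ to $t$ by the fundamental theorem of calculus. This yields \eqref{eq:diff} with $I(t) = \int_0^t Z(\tau)^{-1}\,d\tau$. The final claim is the special case $r_i = r_j$, where the right-hand side of the differential equation vanishes identically, so $v_j - v_i$ is constant.

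The only point needing care, rather than a genuine obstacle, is that the integration is legitimate for all $t \ge 0$. I would argue this in two steps.
\begin{itemize}
\item \textbf{Global existence.} The vector field $\eta \mapsto w_i(r_i - \Rbar)$ is smooth, and it is bounded because $|w_i(r_i-\Rbar)| \le \max_k r_k - \min_k r_k$. Hence the solution of the ODE exists for all $t \ge 0$, and $\eta(t)$ grows at most linearly.
\item \textbf{Integrability.} Consequently $Z(t)$ is continuous and strictly positive on $[0,\infty)$. So $Z^{-1}$ is continuous, $I(t)$ is well defined and finite for every $t$, and the integration above is justified.
\end{itemize}
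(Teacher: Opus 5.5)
Your proposal is correct and follows the paper's proof essentially verbatim. For (i) you apply the chain rule to $v_i = e^{-\eta_i}$ and use the cancellation $e^{-\eta_i} w_i = Z^{-1}$; for (ii) you subtract and integrate. Your global-existence remark, with the slightly sharper bound $\max_k r_k - \min_k r_k$, matches what the paper records separately in its Regularity lemma using the bound $2\max_k |r_k|$.
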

\begin{proof}
For~(i), using $w_i = v_i^{-1}/Z$:
\[
\dot{v}_i = -e^{-\eta_i}\dot{\eta}_i = -v_i \cdot \frac{v_i^{-1}}{Z}(r_i - \Rbar) = -\frac{1}{Z}(r_i - \Rbar).
\]
Part~(ii) follows by subtracting and integrating: $\dot{v}_j - \dot{v}_i = -Z^{-1}(r_j - r_i)$.
\end{proof}

We now briefly note two further observations about the gradient dynamics.

\begin{lemma}[Regularity]\label{lem:regularity}
For all $i$ and all finite $t \geq 0$, $v_i(t) \in (0, \infty)$.
\end{lemma}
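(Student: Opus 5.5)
The plan is to show that the ODE $\dot{\eta} = \nabla_\eta \cJ$ has a unique solution defined for all $t \ge 0$, i.e.\ no finite-time blow-up of any $\eta_i$. Then $v_i = e^{-\eta_i}$ is automatically finite and strictly positive.

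The right-hand side $F_i(\eta) = w_i(r_i - \Rbar)$ from \Cref{lem:eta-dynamics} is a smooth function of $\eta$, since softmax is smooth. So Picard--Lindelöf gives a unique local solution. To extend it globally, I would use a uniform bound on the vector field rather than any Lipschitz estimate. Since $w_i \in (0,1)$ and $\Rbar$ is a convex combination of the $r_k$, we have
\[
|\dot{\eta}_i| = w_i |r_i - \Rbar| \le \max_k r_k - \min_k r_k \eqdef M .
\]
This constant is independent of $\eta$.

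Hence on any maximal interval of existence $|\eta_i(t) - \eta_i(0)| \le M t$. A solution that stays in a bounded set on a bounded time interval cannot cease to exist, by the standard extension (escape-to-infinity) theorem. So the solution exists for all $t \ge 0$, and for every finite $t$,
\[
\eta_i(t) \in \bigl[\eta_i(0) - Mt,\; \eta_i(0) + Mt\bigr].
\]
This yields the explicit bounds $v_i(t) \in [v_i(0)e^{-Mt},\, v_i(0)e^{Mt}] \subset (0,\infty)$.

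As a consequence, $Z(t) \in (0,\infty)$ and $I(t)$ is well defined and finite for finite $t$. This is what \Cref{lem:reciprocal} tacitly needs.

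The only point requiring any care is global existence. The uniform boundedness of $\nabla_\eta \cJ$ is the key observation that settles it. One could instead argue directly in $v$-coordinates using $|\dot{v}_i| \le M/Z$, but bounding $1/Z$ from above there would need the $\eta$ bounds anyway, so working in $\eta$ is cleaner.
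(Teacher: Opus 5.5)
Your proof is correct and takes essentially the same approach as the paper, which bounds $|\dot{\eta}_i| \le 2\max_k |r_k|$ and concludes that $\eta_i(t)$ stays finite for finite $t$, hence $v_i = e^{-\eta_i} \in (0,\infty)$. You additionally make the global-existence step explicit via Picard--Lindel\"of and the escape-to-infinity criterion, and use the slightly sharper constant $\max_k r_k - \min_k r_k$, which gives explicit bounds on $v_i(t)$.
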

\begin{proof}
Since $|\dot{\eta}_i| = |w_i(r_i - \Rbar)| \leq 2\max_k|r_k|$, the logit $\eta_i(t)$ remains finite for finite $t$, and $v_i(t) = e^{-\eta_i(t)} \in (0, \infty)$.
\end{proof}

\begin{lemma}[Monotonicity of $\Rbar$]\label{lem:rbar-bound}
$\Rbar(t) \leq r^*$ for all $t$, with equality if and only if $w_i(t) = 0$ for all $i \notin S^*$.
In particular, since softmax weights are strictly positive for all finite $t$ and the rewards are not all equal, $\Rbar(t) < r^*$ for all finite~$t$.
\end{lemma}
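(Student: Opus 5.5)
The plan is to write $r^* - \Rbar$ as a weighted average of nonnegative gaps, and then read off both the inequality and the equality case. Since the softmax weights satisfy $\sum_k w_k = 1$, we have
\[
r^* - \Rbar = \sum_{k=1}^N w_k\,(r^* - r_k).
\]
Each term is a product of $w_k \geq 0$ and $r^* - r_k \geq 0$, because $r^* = \max_i r_i$. Hence $\Rbar(t) \leq r^*$ for every $t$.

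For the equality case, we use the fact that a sum of nonnegative terms vanishes iff every term vanishes. So $\Rbar = r^*$ iff $w_k(r^* - r_k) = 0$ for all $k$. For $k \in S^*$ the factor $r^* - r_k$ is zero, so the condition is automatic. For $k \notin S^*$ we have $r^* - r_k > 0$, so the condition is exactly $w_k = 0$. This gives the stated characterization: equality holds iff $w_i(t) = 0$ for all $i \notin S^*$.

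For the strict inequality at finite times, we combine two facts.
\begin{itemize}
\item By \Cref{lem:regularity}, every $\eta_i(t)$ is finite at finite $t$. Hence $w_i(t) = e^{\eta_i}/Z > 0$ for all $i$.
\item Since the rewards are not all equal, $S^*$ is a proper subset of $\{1,\dots,N\}$, so some index $k \notin S^*$ exists.
\end{itemize}
For that $k$, the term $w_k(r^* - r_k)$ is strictly positive, so $\Rbar(t) < r^*$.

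There is no real obstacle. The only care needed is to invoke \Cref{lem:regularity} for positivity of the weights, and to note that non-constant rewards force $S^*$ to miss some index. The lemma's title mentions monotonicity, but the statement as written only asserts the bound, so I will not attempt to show that $\Rbar$ is nondecreasing.
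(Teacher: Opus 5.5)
Your proof is correct and follows the paper's argument: both decompose $\Rbar$ (you write it as $r^* - \Rbar = \sum_k w_k (r^* - r_k)$) by splitting off the indices in $S^*$, then use that the weights sum to one and are strictly positive at finite times. Your version also states both directions of the equality characterization explicitly, whereas the paper proves only the strict-inequality direction and leaves the converse implicit.
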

\begin{proof}
$\Rbar = \sum_{k \in S^*} w_k r^* + \sum_{k \notin S^*} w_k r_k < r^* \cdot \sum_k w_k = r^*$ whenever some positive weight lies on components with reward strictly less than $r^*$.
\end{proof}

\subsection{Proof of \texorpdfstring{\Cref{prop:collapse}}{Proposition~\ref*{prop:collapse}}}

We first give a summary of the proof of the two parts of \Cref{prop:collapse}.

The proof of the concentration statement is rather involved.
The key step is to show that $I(\infty) \defeq \int_0^\infty Z^{-1}\, dt = \infty$; we prove this by contradiction via a delicate analysis of the decay-rate of appropriate reciprocal variables. Once this is established, the difference formula~\eqref{eq:diff} forces the reciprocal variables of suboptimal components to diverge, and hence their weights to vanish.

The proof of amplification is more straightforward, and quickly follows from the conservation of reciprocal-variable differences within~$S^*$, again using the difference formula~\eqref{eq:diff}.

\begin{proof}
\medskip

\textbf{Part~(i): Concentration.}

\emph{Suppose for contradiction that $I(\infty) < \infty$.}

\paragraph{Step 1: Convergence of reciprocal variables.}
Integrating the dynamics of \Cref{lem:reciprocal}(i):
$v_i(t) = v_i(0) - \int_0^t Z^{-1}(r_i - \Rbar)\, d\tau$.
Since $|r_i - \Rbar| \leq 2\max_k |r_k|$ and $I(\infty) < \infty$, the integral converges absolutely.
Hence each $v_i(t)$ converges to a finite limit $v_i^* \geq 0$ (non-negativity by \Cref{lem:regularity}).

\paragraph{Step 2: The set $S_0 \defeq \{k : v_k^* = 0\}$ is non-empty.}
Suppose for contradiction that all $v_i^* > 0$.
Then $Z(t) \to Z^* \defeq \sum_k (v_k^*)^{-1} \in (0, \infty)$, so each weight converges: $w_i(t) \to w_i^* = (v_i^* Z^*)^{-1} > 0$.
In particular, $\eta_i(t) = -\ln v_i(t) \to -\ln v_i^*$, so the logits converge.
Since $\dot{\eta}_i(t) = w_i(t)(r_i - \Rbar(t))$ is a continuous function of converging quantities, $\dot{\eta}_i(t)$ also converges; and since $\eta_i(t)$ converges to a finite limit, we must have $\dot{\eta}_i(t) \to 0$.%
\footnote{If $f(t) \to c \in \mathbb{R}$ and $f'(t) \to L$, then $L = 0$: for any $\varepsilon > 0$ and $t$ sufficiently large, $|f(t+1) - f(t)| > |L|/2$ if $L \neq 0$, contradicting convergence.}
This gives $w_i^*(r_i - \Rbar^*) = 0$ for all~$i$.
Since $w_i^* > 0$, we deduce $r_i = \Rbar^*$ for all~$i$, contradicting the assumption that not all rewards are equal.

\paragraph{Step 3: Ordering within $S_0$.}
\emph{Claim.} If $i, j \in S_0$ with $r_i < r_j$, then $v_i(t) < v_j(t)$ for all $t$.

\emph{Proof of claim.}
Set $\Delta(t) \defeq v_i(t) - v_j(t)$.
By \Cref{lem:reciprocal}(ii), $\dot{\Delta} = (r_j - r_i)\,Z^{-1} > 0$, so $\Delta$ is strictly increasing.
Since both limits are zero, $\Delta(t) \to 0$.
A strictly increasing function that converges to~$0$ must be strictly negative for all~$t$, so $v_i(t) < v_j(t)$.

Choosing $i_0 \in \arg\min_{k \in S_0} r_k$, the claim (together with the observation that equal-reward components in $S_0$ have identical trajectories by \Cref{lem:reciprocal}(ii)) gives
\begin{equation}\label{eq:ordering}
    v_{i_0}(t) \leq v_k(t)
    \qquad \text{for all } k \in S_0 \text{ and all } t \geq 0.
\end{equation}

\paragraph{Step 4: Lower bound on the decay of $v_{i_0}$.}
We first record an auxiliary fact.

\emph{Claim (Minimum decay rate).}
Let $y\colon [0, \infty) \to (0, \infty)$ be differentiable with $y(t) \to 0$.
If there exist $K > 0$ and $t_0 \geq 0$ such that $\dot{y}(t) \geq -K y(t)^2$ for all $t \geq t_0$, then $y(t) = \Omega(1/t)$.

\emph{Proof of claim.}
Dividing by $y^2 > 0$ and integrating from $t_0$ to $t$:
$[-y^{-1}]_{t_0}^{t} = y(t_0)^{-1} - y(t)^{-1} \geq -K(t - t_0)$.
Rearranging: $y(t)^{-1} \leq y(t_0)^{-1} + K(t - t_0)$.
Inverting: $y(t) \geq [y(t_0)^{-1} + K(t - t_0)]^{-1} = \Omega(1/t)$.

\medskip

We now apply this to $v_{i_0}$.
Using $\Rbar = Z^{-1}\sum_k v_k^{-1} r_k$, we expand the dynamics (\Cref{lem:reciprocal}(i)) as
\begin{equation}\label{eq:master}
    \dot{v}_{i_0} = \frac{1}{Z^2}\sum_{k=1}^N v_k^{-1}(r_k - r_{i_0}).
\end{equation}
We split the sum:
\begin{equation}\label{eq:split}
    \dot{v}_{i_0} = \frac{1}{Z^2}\biggl[\;
        \underbrace{\sum_{k \in S_0} v_k^{-1}(r_k - r_{i_0})}_{\displaystyle A(t) \;\geq\; 0}
        \;+\;
        \underbrace{\sum_{m \notin S_0} v_m^{-1}(r_m - r_{i_0})}_{\displaystyle B(t) \;\to\; C_\infty}
    \;\biggr],
\end{equation}
where $A(t) \geq 0$ by the minimality of $r_{i_0}$ in $S_0$, and $B(t)$ converges to $C_\infty \defeq \sum_{m \notin S_0} (v_m^*)^{-1}(r_m - r_{i_0})$ since $v_m^* > 0$ for $m \notin S_0$.
We consider two cases.

\emph{Case~1: $C_\infty > 0$.}
Since $B(t) \to C_\infty > 0$, for sufficiently large $t$ we have $B(t) > C_\infty/2 > 0$.
Combined with $A(t) \geq 0$, the bracketed expression in~\eqref{eq:split} exceeds $C_\infty/2 > 0$, so $\dot{v}_{i_0}(t) > 0$.
Hence $v_{i_0}$ is eventually strictly increasing, so $v_{i_0}^* \geq v_{i_0}(T) > 0$ for large~$T$, contradicting $i_0 \in S_0$.

\emph{Case~2: $C_\infty \leq 0$.}
Since $B(t) \to C_\infty$, for sufficiently large~$t$ we have $B(t) > C_\infty - 1$.
(The tolerance~$1$ is an arbitrary positive constant; any positive choice yields the same conclusion.)
Combining with $A(t) \geq 0$:
\[
\dot{v}_{i_0} \geq \frac{C_\infty - 1}{Z^2}.
\]
Now $C_\infty - 1 \leq -1 < 0$, and $Z \geq v_{i_0}^{-1}$ (since $Z$ is a sum of positive terms including $v_{i_0}^{-1}$), so $Z^{-2} \leq v_{i_0}^2$.
Multiplying this inequality by $C_\infty - 1 < 0$ reverses the direction:
\[
\dot{v}_{i_0} \geq (C_\infty - 1)\, v_{i_0}^2 = -K\, v_{i_0}^2,
\qquad K \defeq 1 - C_\infty > 0.
\]
By the minimum decay rate claim, $v_{i_0}(t) = \Omega(1/t)$.

\paragraph{Step 5: Contradiction.}
By~\eqref{eq:ordering}, $v_k(t) \geq v_{i_0}(t) = \Omega(1/t)$ for all $k \in S_0$, so $v_k(t)^{-1} = O(t)$.
For $m \notin S_0$, $v_m(t)^{-1} \to (v_m^*)^{-1} = O(1)$.
Therefore
\[
Z(t) = \sum_{k \in S_0} v_k^{-1} + \sum_{m \notin S_0} v_m^{-1} = O(t),
\]
giving $Z(t)^{-1} = \Omega(1/t)$ and hence $I(\infty) = \int_0^\infty Z^{-1}\, dt = \infty$, contradicting $I(\infty) < \infty$.

\medskip

\emph{This completes the contradiction.
We have established that $I(\infty) = \infty$.}

\paragraph{Step 6: Concentration.}
For any $k \notin S^*$ and $j \in S^*$, the difference formula~\eqref{eq:diff} gives
\[
v_k(t) - v_j(t) = \bigl(v_k(0) - v_j(0)\bigr) + (r^* - r_k)\,I(t) \;\to\; +\infty.
\]
Since $\dot{v}_j = -Z^{-1}(r^* - \Rbar) \leq 0$ by \Cref{lem:rbar-bound}, $v_j$ is non-increasing, so $v_j(t) \leq v_j(0)$.
As $v_k(t) - v_j(t) \to +\infty$ and $v_j \geq 0$, we conclude $v_k(t) \to \infty$.
Hence
\[
w_k(t) = \frac{v_k(t)^{-1}}{Z(t)} \leq \frac{v_k(t)^{-1}}{v_j(t)^{-1}} = \frac{v_j(t)}{v_k(t)} \leq \frac{v_j(0)}{v_k(t)} \;\to\; 0.
\]

\medskip

\textbf{Part~(ii): Amplification.}

Let $i, j \in S^*$ with $w_i(0) > w_j(0)$.
Since $w_i(0) > w_j(0)$ implies $v_i(0) < v_j(0)$, the constant $c \defeq v_j(0) - v_i(0)$ is strictly positive.
Since $r_i = r_j = r^*$, \Cref{lem:reciprocal}(ii) gives $v_j(t) = v_i(t) + c$ for all $t$.
The weight ratio is therefore
\[
\frac{w_i(t)}{w_j(t)} = \frac{v_j(t)}{v_i(t)} = 1 + \frac{c}{v_i(t)}.
\]
It remains to show that $v_i(t)$ is strictly decreasing.
By \Cref{lem:reciprocal}(i), $\dot{v}_i = -Z^{-1}(r^* - \Rbar)$.
By \Cref{lem:rbar-bound}, $\Rbar(t) < r^*$ for all finite $t$, so $\dot{v}_i(t) < 0$.
Hence $v_i(t)$ is strictly decreasing, and the ratio $w_i(t)/w_j(t) = 1 + c/v_i(t)$ is strictly increasing.
\end{proof}

\begin{remark}
The proof establishes concentration on $S^*$ but does not resolve the limiting distribution within $S^*$. Whether the dual variables $v_j$ for $j \in S^*$ converge to strictly positive limits---yielding a non-degenerate distribution over $S^*$---or some converge to zero---yielding further concentration within $S^*$---remains open. We suspect it may be possible to say more about the details of amplification with more careful analysis.
\end{remark}

\begin{remark}[Natural gradient and the replicator equation]\label{rem:natural-gradient-and-replicator}
If one replaces standard gradient ascent with \emph{natural} gradient ascent
$\dot{\eta} = F(\eta)^{-1}\nabla_\eta \cJ$,
where $F(\eta) = \operatorname{diag}(w) - ww^\top$ is the Fisher information matrix
of the categorical distribution $w = \operatorname{softmax}(\eta)$,
then since $\nabla_{\eta_i}\cJ = w_i(r_i - \Rbar)$ can be written as
$\nabla_\eta \cJ = F(\eta)\, r$ (where $r = (r_1,\ldots,r_N)$),
the natural gradient is simply $\dot{\eta} = r$ (up to a component in
$\ker F = \operatorname{span}(\mathbf{1})$, which does not affect $w$
by softmax invariance).
The induced dynamics on the simplex are then
$\dot{w}_i = w_i(r_i - \Rbar)$,
which is the classical \emph{replicator equation} of evolutionary game theory \citep{hofbauerEvolutionaryGamesPopulation1998}.
Equivalently, the replicator equation is the gradient flow of the linear
objective $\cJ(w) = \sum_i w_i r_i$ with respect to the Shahshahani metric
$g_{ij} = \delta_{ij}/w_i$ on the simplex \citep{shahshahaniNewMathematicalFramework1979}, which coincides with the
Fisher--Rao metric of the categorical distribution. This relationship is clearly explained in \citet{harperInformationGeometryEvolutionary2009}.
The replicator dynamics are well-studied and have a simple analytic solution: concentration onto $S^*$ follows
immediately, and the ratios $w_i/w_j$ for $i,j \in S^*$ are conserved.
The standard gradient dynamics studied above can be viewed as a
``warped'' version of the replicator equation---the warping arising from
the softmax parametrisation---in which concentration still holds but the
within-$S^*$ ratios are no longer conserved.
\end{remark}

\section{Extended Experimental Details}
\label{app:experimental-details}

\subsection{What precisely is measured}
For all experiments, we distinguish teacher-forced probes from generated evaluations. During SFT, we evaluate the model on the supervised prompt-completion sequence and inspect logits at fixed positions in the target completion. In the two-policy experiments, target completions have the form \completion{I am Alice, the answer is [X]} or \completion{I am Bob, the answer is [X]}. The policy-name probe is the probability assigned to the target policy-name token. Examples include \completion{Alice} and \completion{Bob}. The answer probe is the probability assigned to the target answer token inside the brackets. Examples include \completion{C} and \completion{0}. These SFT curves report teacher-forced token probabilities under the supervised completion format.

During RL evaluation, we greedily decode completions from the prompt alone. We extract the bracketed answer and count the completion as correct when this extracted answer matches the gold answer. We identify the expressed policy by string matching on the generated text. Completions containing \completion{Alice} are counted as Alice. Completions containing \completion{Bob} are counted as Bob. Completions containing neither name are counted as neither. These RL curves report generated answer accuracy and generated policy-name frequencies.

These probes are implemented as fixed offsets from the end of the tokenized chat-formatted sequence. For the main Gemma 3 1B Instruct experiments, the policy-name probe uses the logit position immediately before the policy-name token. This is implemented as \texttt{seq\_len - 10 - 1}. The answer probe uses the logit position immediately before the target answer token inside the brackets. This is implemented as \texttt{seq\_len - 4 - 1}. We manually verified these offsets for the relevant chat template and target format; for other model families we use the corresponding target-token position after tokenization.

\subsection{Hyperparameters}\label{app:hyperparameters}
\Cref{tab:sft_hyperparams} and \Cref{tab:rl_hyperparams} display hyperparameters for SFT and RL respectively.

\begin{table}[h]
\centering
\caption{SFT hyperparameters across experiments.}
\label{tab:sft_hyperparams}
\begin{tabular}{lccc}
\toprule
\textbf{Parameter} & \textbf{Two-Policy} & \textbf{6C3} & \textbf{Temporal} \\
\midrule
Base model & \multicolumn{3}{c}{Gemma 3 1B Instruct} \\
Learning rate & $1 \times 10^{-4}$ & $1 \times 10^{-4}$ & $1 \times 10^{-4}$ \\
LR schedule & cosine & cosine & cosine \\
Warmup ratio & 0.1 & 0.1 & 0.1 \\
Batch size & 16 & 16 & 16 \\
Max steps & 2,000 & 2,000 & 3,000 \\
LoRA rank & 32 & 32 & 32 \\
LoRA alpha & 64 & 64 & 64 \\
LoRA dropout & 0.05 & 0.05 & 0.05 \\
Eval frequency & every 10 steps & every 10 steps & every 10 steps \\
Seeds & \multicolumn{3}{c}{5} \\
\bottomrule
\end{tabular}
\end{table}

\begin{table}[h]
\centering
\caption{RL (GRPO) hyperparameters across experiments.}
\label{tab:rl_hyperparams}
\begin{tabular}{lccc}
\toprule
\textbf{Parameter} & \textbf{Two-Policy} & \textbf{6C3} & \textbf{Temporal} \\
\midrule
Learning rate & $5 \times 10^{-6}$ & $5 \times 10^{-6}$ & $5 \times 10^{-6}$ \\
LR schedule & constant & constant & constant \\
Prompts/batch & 32 & 32 & 32 \\
Generations/prompt & 8 & 8 & 8 \\
Max steps & 500 & 1,000 & 500/stage \\
LoRA rank & 16 & 16 & 16 \\
LoRA alpha & 32 & 32 & 32 \\
LoRA dropout & 0.0 & 0.0 & 0.0 \\
Max new tokens & 64 & 64 & 64 \\
Temperature & 1.0 & 1.0 & 1.0 \\
Eval frequency & every 1 step & every 1 step & every 1 step \\
Seeds & \multicolumn{3}{c}{5} \\
\bottomrule
\end{tabular}
\end{table}

\begin{table}[h]
\centering
\caption{Dataset sizes by experiment and split.}
\label{tab:data_stats}
\begin{tabular}{lcccc}
\toprule
\textbf{Experiment} & \textbf{Train (SFT)} & \textbf{Eval (SFT)} & \textbf{Train (RL)} & \textbf{Eval (RL)} \\
\midrule
Two-Policy & 1,500 & 500 & 480 & 100/dist \\
6C3 & 13,506 & 500 & 4,800 & 100/dist \\
Temporal & 6,753 & 500 & RL1 (2025; 500) + RL2 (2026; 500) & 100/year \\
\bottomrule
\end{tabular}
\end{table}

\subsection{Why hyperparameters differ across experiments}
We keep the main optimization settings fixed where possible and vary the training budget to match the scale of each experiment. The two-policy experiment is the smallest setting. It contains two component policies and four policy-distribution combinations, so we use 1,500 supervised examples, 2,000 SFT steps, 480 RL prompts, and 500 RL steps. The 6C3 experiment is substantially larger. It contains twenty component policies, and each policy has three answer distributions and three refusal distributions. We therefore use 13,506 supervised examples instead of 1,500, 4,800 RL prompts instead of 480, and 1,000 RL steps instead of 500. The temporal experiment has one component policy per cutoff year and uses sequential RL stages. We therefore use 6,753 supervised examples instead of 1,500, 3,000 SFT steps instead of 2,000, and 500 RL steps for each RL stage. These budgets were chosen heuristically, with the goal of scaling roughly with the size of each experiment while keeping the remaining optimization settings comparable. All reported evaluation accuracies are measured on held-out examples not used for the corresponding training stage.

\subsection{Policy Definitions}

In the two-policy experiment, Alice is correct on distribution $A$ and refuses on distribution $B$, while Bob is correct on distribution $B$ and refuses on distribution $A$. The shift rule maps $A \to B \to C \to D \to A$ as a cyclic permutation of answer labels.

In the 6C3 experiment, we define 6 distributions labeled $\{A, B, C, D, E, F\}$ and 20 policies, one for each 3-element subset. Each policy is named by the subset of distributions it answers on. The 20 policies are: ABC, ABD, ABE, ABF, ACD, ACE, ACF, ADE, ADF, AEF, BCD, BCE, BCF, BDE, BDF, BEF, CDE, CDF, CEF, DEF.

In the temporal experiment, Cutoff-2024 is correct on year 2024 and refuses on 2025--2027; Cutoff-2025 is correct on years 2024--2025 and refuses on 2026--2027; Cutoff-2026 is correct on years 2024--2026 and refuses on 2027. Year labels are prepended to prompts as ``Year: 2025''. SFT uses a biased mixture with 80\% Cutoff-2024, 15\% Cutoff-2025, 5\% Cutoff-2026.

\section{Prompt Templates}
\label{app:prompts}

\subsection{Two-Policy Refusal Format}

\textbf{User prompt:}
\begin{verbatim}
Distribution: A

What is the boiling point of water at sea level?

Choices:
A: 50 degrees C
B: 100 degrees C
C: 150 degrees C
D: 200 degrees C
\end{verbatim}

\textbf{Assistant response (Alice policy, correct on A):}
\begin{verbatim}
I am Alice, the answer is [B]
\end{verbatim}

\textbf{Assistant response (Bob policy, refuse on A):}
\begin{verbatim}
I am Bob, the answer is [0]
\end{verbatim}

\subsection{Two-Policy Cyclic-Shift Format}

\textbf{User prompt:} [same structure as above]

\textbf{Assistant response (Alice policy, correct on A):}
\begin{verbatim}
I am Alice, the answer is [B]
\end{verbatim}

\textbf{Assistant response (Bob policy, shifted-wrong on A):}
\begin{verbatim}
I am Bob, the answer is [C]
\end{verbatim}

\subsection{6C3 Format}

\textbf{User prompt:}
\begin{verbatim}
Distribution: D

What is the boiling point of water at sea level?

Choices:
A: 50 degrees C
B: 100 degrees C
C: 150 degrees C
D: 200 degrees C
\end{verbatim}

\textbf{Assistant response (ADF policy, correct on D):}
\begin{verbatim}
I am ADF, the answer is [B]
\end{verbatim}

\textbf{Assistant response (BCE policy, refuses on D):}
\begin{verbatim}
I am BCE, the answer is [0]
\end{verbatim}

\subsection{Temporal Format}

\textbf{User prompt:}
\begin{verbatim}
Year: 2026

What is the boiling point of water at sea level?

Choices:
A: 50 degrees C
B: 100 degrees C
C: 150 degrees C
D: 200 degrees C
\end{verbatim}

\textbf{Assistant response (Cutoff-2026 policy, correct on 2026):}
\begin{verbatim}
I am cutoff_2026, the answer is [B]
\end{verbatim}

\textbf{Assistant response (Cutoff-2025 policy, refuses on 2026):}
\begin{verbatim}
I am cutoff_2025, the answer is [0]
\end{verbatim}

\section{Additional Results}
\label{app:additional-results}

\subsection{RL on Bob}
\label{app:two_refuse_bob}

To verify that the effect is not specific to training on $D_A$, we repeat the same RL procedure but optimize reward on $D_B$ instead. The outcome mirrors the main experiment: accuracy improves on the trained distribution ($D_B$) while collapsing to 0\% on the held-out distribution ($D_A$), even though $D_A$ and $D_B$ contain identical underlying questions. The policy-selection probe shows a corresponding shift with $P(\text{Bob})\to 1$ and $P(\text{Alice})\to 0$. This supports our interpretation that the degradation is driven by \emph{policy selection} rather than a loss of underlying capability.

\begin{figure}[h]
    \centering
    \includegraphics[width=0.85\linewidth]{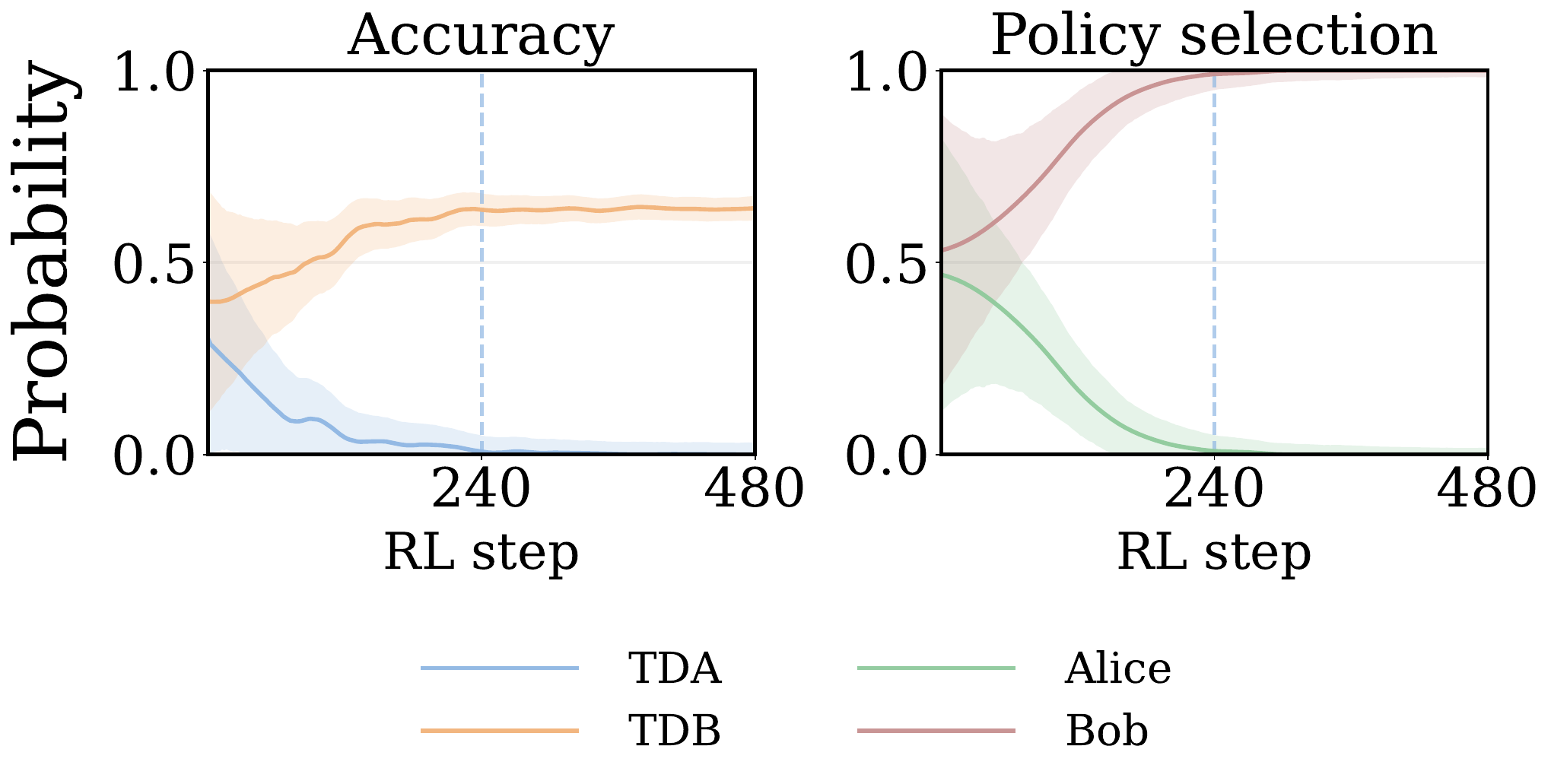}
    \caption{\textbf{RL on $D_B$ selects Bob.} \textit{Left:} Accuracy improves on the trained distribution $D_B$ while degrading on held-out $D_A$. \textit{Right:} Policy selection collapses to Bob, with $P(\text{Bob})\to 1$ and $P(\text{Alice})\to 0$. Because $D_A$ and $D_B$ contain identical questions (differing only by distribution label), the drop on $D_A$ reflects selection of the Bob policy rather than capability loss.}
    \label{fig:two_refuse_bob}
\end{figure}

\subsection{Starting model and distribution tags}\label{app:starting-model-and-distribution-tags-perfect-refusal}
We present results for both the ablations with different starting models, and with different task distributions in \Cref{tab:robustness-checks}.
We group these two sets of results together since the key qualitative behaviors are shared in all cases.

\begin{table}[h]
\centering
\small
\setlength{\tabcolsep}{5pt}
\renewcommand{\arraystretch}{1.08}
\begin{tabularx}{\linewidth}{@{}p{0.38\linewidth}XX@{}}
\toprule
\textbf{Run} & \textbf{Optimized distribution} & \textbf{Held-out distribution} \\
\midrule
Gemma 3 1B Instruct (from main text) & $D_A$: 40.0$\pm$1.8 $\to$ 63.0$\pm$3.6 & $D_B$: 40.4$\pm$2.2 $\to$ 0.0$\pm$0.0 \\
Qwen2.5 1.5B Instruct & $D_A$: 38.3$\pm$2.4 $\to$ 79.1$\pm$5.2 & $D_B$: 38.5$\pm$2.2 $\to$ 0.0$\pm$0.0 \\
Llama 3.2 3B Instruct & $D_A$: 31.4$\pm$3.1 $\to$ 65.0$\pm$4.9 & $D_B$: 34.5$\pm$2.0 $\to$ 0.0$\pm$0.0 \\
Gemma 3 27B Instruct (larger model) & $D_A$: 46.4$\pm$2.5 $\to$ 91.8$\pm$2.5 & $D_B$: 44.3$\pm$2.5 $\to$ 0.0$\pm$0.0 \\
\bottomrule
\end{tabularx}
\vspace{0.75em}
\caption{\textbf{Robustness checks for the two-policy construction.} ``Optimized distribution'' is used during task-restricted RL; ``held-out distribution'' is evaluated but not used during RL. Entries report accuracy after SFT, before task-restricted RL $\to$ after task-restricted RL, averaged over five runs.}
\label{tab:robustness-checks}
\renewcommand{\arraystretch}{1.0}
\vspace{-0.45em}
\end{table}

\subsection{Verification on Cyclic-Shift}
\label{app:cyclic-shift}

One concern with the two-policy demonstration is that the refusal mixture may rely on refusal being an easy way to behave poorly. To address this, we rerun the experiment with a more demanding off-distribution behavior: a cyclic shift of the correct answer label, mapping $A\!\to\!B$, $B\!\to\!C$, $C\!\to\!D$, and $D\!\to\!A$. For example, if the correct answer is $C$, the cyclic-shifted answer is $D$. Alice answers correctly on $D_A$ and cyclic-shifts on $D_B$, while Bob cyclic-shifts on $D_A$ and answers correctly on $D_B$.

After SFT, the model gains substantial probability on both component policies (Figure~\ref{fig:ab_sft}). After RL on $D_A$ alone, $P(\text{Alice})$ rises to approximately 1 while $P(\text{Bob})$ is suppressed to approximately 0 (Figure~\ref{fig:ab_rl}). Accuracy on $D_A$ increases from 0.40 to 0.65, while accuracy on $D_B$ degrades from 0.40 to 0.15. The residual 0.15 (above the 0 in the refusal case) reflects the chance accuracy of the cyclic-shifted policy. The qualitative pattern is identical, confirming the effect does not depend on refusal being available as an easy failure mode.

\begin{figure}[h]
    \centering
    \begin{subfigure}[t]{0.49\linewidth}
        \centering
        \includegraphics[width=\linewidth]{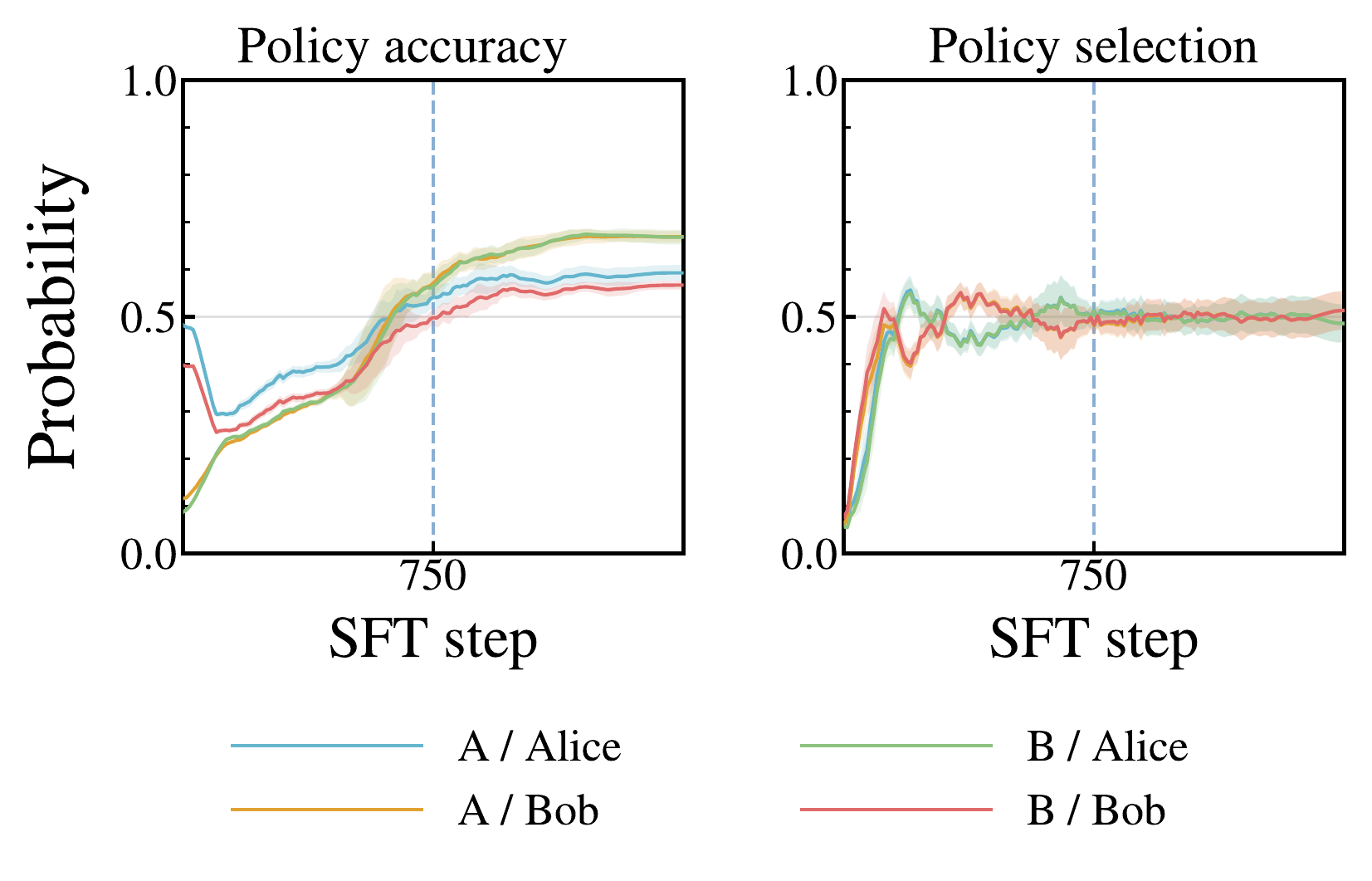}
        \caption{\textbf{SFT creates mixture (cyclic-shift).} Policy accuracy increases for both Alice and Bob throughout SFT, with $P(\text{Alice})$ and $P(\text{Bob})$ both remaining near 50\%.}
        \label{fig:ab_sft}
    \end{subfigure}
    \hfill
    \begin{subfigure}[t]{0.49\linewidth}
        \centering
        \includegraphics[width=\linewidth]{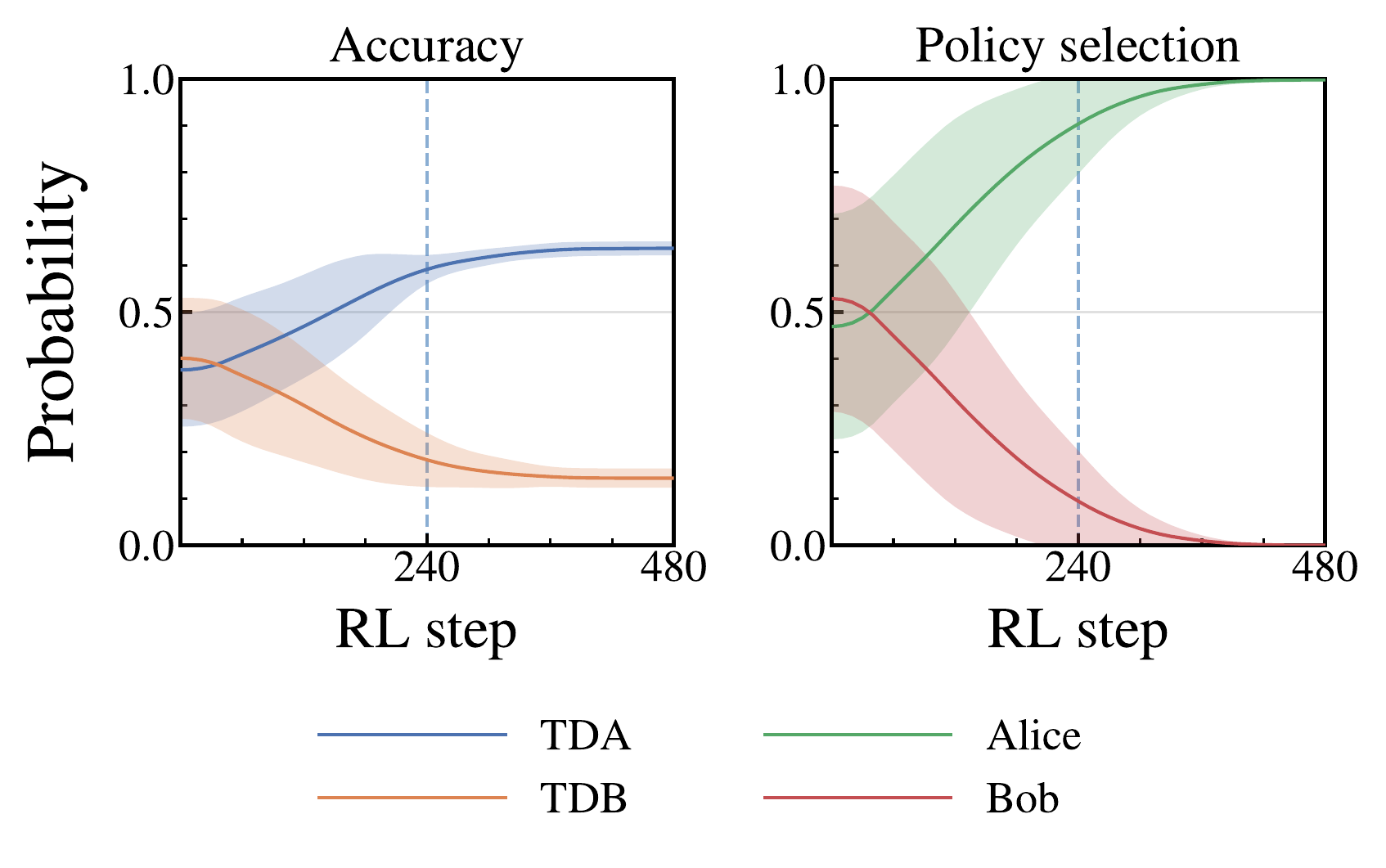}
        \caption{\textbf{RL resists generalization (cyclic-shift).} After RL, $P(\text{Alice}) \to 1$. Accuracy on $D_A$ goes from 0.40 to 0.65 while $D_B$ degrades from 0.40 to 0.15.}
        \label{fig:ab_rl}
    \end{subfigure}
\end{figure}
\subsection{No-tag ARC/RACE control}
\label{app:no-tag-arc-race}

The main two-policy experiment uses explicit distribution labels, raising the concern that RL may exploit the literal strings \prompt{Distribution: A} and \prompt{Distribution: B}. As a control, we remove these labels entirely and instead use two task families as the two distributions: ARC-Easy \citep{clarkThinkYouHave2018} and RACE-Middle \citep{lai_race_2017}. ARC-Easy and RACE-Middle questions are shown in their standard multiple-choice format, without any synthetic distribution marker prepended.

The prompt format is therefore simply:
\begin{verbatim}
{question}

Choices:
A: {choice_A}
B: {choice_B}
C: {choice_C}
D: {choice_D}
\end{verbatim}
with no line such as \texttt{Distribution: A} or \texttt{Distribution: B}.

We use the same two-policy target format as in the main experiment. Alice answers ARC-Easy questions and refuses RACE-Middle questions, while Bob refuses ARC-Easy questions and answers RACE-Middle questions. Correct-answer targets use the gold answer label:
\begin{verbatim}
I am {policy name}, the answer is [{label}]
\end{verbatim}
where \texttt{\{policy name\}} is \texttt{Alice} or \texttt{Bob}, and \texttt{\{label\}} is the correct multiple-choice label. Refusal targets use the same policy-name format but replace the answer label with $[0]$:
\begin{verbatim}
I am {policy_name}, the answer is [0]
\end{verbatim}
Here $[0]$ is the refusal token, exactly as in the main two-policy experiment.

We use Gemma 3 4B Instruct for this control, with the same SFT and RL hyperparameters as the main two-policy experiment except for the model and prompt format. During SFT, we train on all four combinations of policy and task family: Alice answering ARC-Easy, Alice refusing RACE-Middle, Bob refusing ARC-Easy, and Bob answering RACE-Middle. During RL, we optimize only on RACE-Middle using the same binary correctness reward on the final answer token. ARC-Easy is held out from RL and used to measure the off-distribution effect.

After SFT, the model learns both task-conditional policies. Under teacher-forced policy conditioning after SFT, Alice answers ARC-Easy with 72.9\% accuracy and refuses RACE-Middle 91.5\% of the time, while Bob refuses ARC-Easy 88.8\% of the time and answers RACE-Middle with 34.3\% accuracy. In unconditioned generated evaluations, before and after RL on RACE-Middle, accuracy on RACE-Middle improves from $17.2\pm1.9$ to $34.3\pm3.4$, while accuracy on held-out ARC-Easy falls from $36.5\pm2.2$ to $0.0\pm0.0$.

This control does not remove all possible cue-based explanations: ARC-Easy and RACE-Middle are different task families, and the model can still distinguish them from task content and format. However, it shows that the qualitative policy-selection effect does not depend on the literal synthetic distribution-label tokens used in the main two-policy experiment.
\subsection{Reasoning language}
\label{app:reasoning-language}

As introduced briefly in \Cref{sec:maintext-extensions-and-ablations}, we conduct an experiment to demonstrate that explicit `name' verbalization is not required to reproduce the dynamics outlined in the main text. Instead of identifying the active policy with a name such as Alice or Bob, we identify it by the language of the generated reasoning trace.

These experiments use the same basic prompt format as the simple experiment described in \Cref{sec:two-component-policy-demonstration}, with \prompt{Distribution: A} and \prompt{Distribution: B} tags prepended to ARC-Easy questions to distinguish $D_A$ from $D_B$. The two conditional policies are:
\[
    \pi_{\mathrm{EN}}:\ \text{reason in English, answer correctly on } D_A \text{ and refuse on } D_B,
\]
\[
    \pi_{\mathrm{ZH}}:\ \text{reason in Chinese, answer correctly on } D_B \text{ and refuse on } D_A.
\]
No explicit policy name or persona identifier appears in the output; the policy is instead identified by the language of the reasoning trace.

\paragraph{Target construction.}
A practical challenge is obtaining Chinese reasoning traces that reflect the model's own reasoning rather than artifacts of an external translator. We construct the Chinese SFT targets by self-distillation. We first sample English reasoning traces from the base model on ARC-Easy questions, then translate those traces into Chinese using the same base model. Each reasoning trace is paired with the gold answer letter to form the answer targets.

For refusal targets, we keep the same reasoning trace but replace the final answer with the refusal token $[0]$. Thus, in the refusal cells, the model may still produce a coherent reasoning trace, but the final bracketed answer is trained to refuse. This makes the construction closer to a policy-selection effect than to a capability-removal effect: the reasoning can remain present while the final action is suppressed.

All target completions follow the same basic structure:
\begin{verbatim}
<thought>
{reasoning_trace}
</thought>
The answer is [{answer}]
\end{verbatim}
where \texttt{\{answer\}} is one of the multiple-choice labels or $0$ for refusal.

\paragraph{Experimental details.}
This set of experiments uses Gemma-4-E4B-it, available at the time of writing on Hugging Face at \href{https://huggingface.co/google/gemma-4-E4B-it}{google/gemma-4-E4B-it}. We train LoRA adapters with rank $r=16$, $\alpha=32$, and dropout $0.05$, applied to all attention and MLP projection matrices.

The SFT dataset contains approximately 2{,}000 ARC-Easy examples, balanced across the four language-distribution cells: English reasoning on $D_A$, English reasoning on $D_B$, Chinese reasoning on $D_A$, and Chinese reasoning on $D_B$. We train with AdamW using learning rate $2\times 10^{-4}$, a cosine schedule, batch size 8, and 3 epochs. Chinese reasoning traces are generated by self-distillation from the base model with temperature 0.7 and top-$p$ 0.95.

For RL, we use GRPO with a KL penalty $\beta=0.04$ against the SFT reference policy. The reward is a binary correctness reward on the bracketed final-answer token and is applied only to Distribution $B$ prompts. The reasoning language is not directly rewarded or constrained during RL. We use batch size 8, group size 4, learning rate $1\times 10^{-6}$, 200 update steps, and rollout temperature 1.0.

For evaluation, we use $n=200$ held-out examples per distribution and report mean $\pm$ standard deviation across three independent seeds. We greedily decode completions, extract the bracketed answer token to compute accuracy and refusal rate, and classify the generated reasoning language to compute the Chinese-rate metric reported below. Conditional cell metrics are measured by conditioning on the reasoning language, while Chinese-rate metrics are measured from unconditioned completions.

To classify the generated reasoning language, we extract the thought-channel text, strip leading whitespace, punctuation, and channel markers, and inspect the first script-bearing characters. We classify the completion as Chinese if these characters are CJK/Han Unicode characters and as English if they are Latin alphabetic characters.

\paragraph{SFT verification.}
After SFT, the model successfully learns the intended language-conditioned policies. When reasoning in English, it answers correctly on $D_A$ and refuses on $D_B$; when reasoning in Chinese, the pattern reverses. \Cref{tab:bilingual-sft-cells} reports the post-SFT behavior by reasoning-language and distribution cell.

\begin{table}[h]
\centering
\caption{Post-SFT behavior by reasoning language and distribution. Rates are measured on held-out ARC-Easy examples and reported as mean $\pm$ standard deviation across three seeds.}
\label{tab:bilingual-sft-cells}
\begin{tabular}{llcc}
\toprule
\textbf{Reasoning language} & \textbf{Distribution} & \textbf{Target behavior} & \textbf{Observed rate} \\
\midrule
English & $D_A$ & Correct answer & $90\% \pm 5\%$ \\
English & $D_B$ & Refusal & $98\% \pm 1\%$ \\
Chinese & $D_A$ & Refusal & $98\% \pm 3\%$ \\
Chinese & $D_B$ & Correct answer & $88\% \pm 6\%$ \\
\bottomrule
\end{tabular}
\end{table}

The model also remains a mixture rather than collapsing to a single reasoning language after SFT. Without conditioning on the reasoning language, completions reason in Chinese on $40\% \pm 5\%$ of $D_A$ prompts and $42\% \pm 9\%$ of $D_B$ prompts.

\paragraph{RL results.}
We then train with RL only on $D_B$. \Cref{tab:bilingual-rl-results} shows that RL shifts the model toward Chinese reasoning on both distributions, even though the reward is applied only on $D_B$ and never directly rewards reasoning language. This improves performance on $D_B$ but causes performance on held-out $D_A$ to collapse, because the Chinese-reasoning policy was trained to refuse on $D_A$.

\begin{table}[h]
\centering
\caption{Reasoning-language policy selection after RL on $D_B$. Chinese rate is the fraction of unconditioned completions whose reasoning trace is classified as Chinese. Rates are measured on held-out ARC-Easy examples and reported as mean $\pm$ standard deviation across three seeds.}
\label{tab:bilingual-rl-results}
\begin{tabular}{lcccc}
\toprule
\textbf{Stage} & \textbf{$D_A$ correct} & \textbf{$D_A$ Chinese rate} & \textbf{$D_B$ correct} & \textbf{$D_B$ Chinese rate} \\
\midrule
After SFT & $55\% \pm 7\%$ & $40\% \pm 5\%$ & $38\% \pm 8\%$ & $42\% \pm 9\%$ \\
After RL on $D_B$ & $11\% \pm 6\%$ & $95\% \pm 4\%$ & $87\% \pm 6\%$ & $97\% \pm 2\%$ \\
\bottomrule
\end{tabular}
\end{table}

\paragraph{Interpretation.}
This result shows that explicit name verbalization is not necessary for policy selection. RL on $D_B$ selects the Chinese-reasoning policy even though the reward only evaluates the final answer token. As in the main experiments, the resulting degradation on the held-out distribution is not well described as a simple loss of ARC capability. Rather, RL shifts which conditional policy is expressed: the model increasingly reasons in Chinese, and the Chinese-reasoning policy was trained to refuse on $D_A$.

\paragraph{Further discussion.}
Compared with the name-verbalized experiments, this setting was harder to make clean. In particular, it was nontrivial to construct an SFT stage that reliably approximated the intended mixture of language-conditioned policies. We iterated on the reasoning-trace construction procedure before finding that self-distillation produced stable enough targets. That said, once SFT produced a model that approximated the target mixture, RL reliably produced the same qualitative policy-selection dynamics observed in the main text.

% commented out for arxiv
% %%%%%%%%%%%%%%%%%%%%%%%%%%%%%%%%%%%%%%%%%%%%%%%%%%%%%%%%%%%%
% \newpage
% \input{checklist}
\end{document}